\DeclareMathOperator*{\argmin}{arg\,min}
\declaretheorem[name=Definition]{definition}
\declaretheorem[name=Lemma]{lemma}
\declaretheorem[name=Theorem]{theorem}
\icmltitlerunning{Interpreting Robustness Proofs of Deep Neural Networks}
\begin{document}

\twocolumn[
\icmltitle{Interpreting Robustness Proofs of Deep Neural Networks}



\icmlsetsymbol{equal}{*}

\begin{icmlauthorlist}
\icmlauthor{Debangshu~Banerjee}{to}
\icmlauthor{Avaljot~Singh}{to}
\icmlauthor{Gagandeep~Singh}{to,goo}
\end{icmlauthorlist}

\icmlaffiliation{to}{University of Illinois Urbana-Champaign}
\icmlaffiliation{goo}{VMware Research}

\icmlcorrespondingauthor{Debangshu~Banerjee}{db21@illinois.edu}

\icmlkeywords{Machine Learning, ICML}

\vskip 0.3in
]



\printAffiliationsAndNotice{} 
\definecolor{WowColor}{rgb}{.75,0,.75}
\definecolor{SubtleColor}{rgb}{0,0,.50}

\newcommand{\aval}[1]{{\textcolor{blue}{[Aval: #1]}}}
\newcommand{\algname}{{SuPFEx }}
\newcommand{\lb}{\Lambda}
\newcommand{\inreg}{\phi}
\newcommand{\outprop}{\psi}
\newcommand*{\Relu}{\mathit{ReLU}}
\newcommand*{\ver}{\mathcal{V}}
\newcommand*{\abstraction}{\mathcal{A}}
\newcommand*{\absFeature}{\mathcal{F}}
\newcommand*{\approxnet}{N^{a}}
\newcommand*{\suffFeature}{\absFeature_{S}}
\newcommand*{\contri}{\Delta}
\newcommand*{\fset}[1]{\mathbb{S}(#1)}
\newcommand*{\grad}{\mathcal{G}}
\newcommand*{\alg}{\mathbb{A}}
\newcommand*{\dlower}{\delta}
\newcommand*{\expect}{\mathop{\mathbb{E}}}
\newcommand*{\cannFeature}{\absFeature_{S_{0}}}
\newcolumntype{C}[1]{>{\centering\arraybackslash}c{#1}}

\begin{abstract}
In recent years numerous methods have been developed to formally verify the robustness of deep neural networks (DNNs). 
Though the proposed techniques are effective in providing mathematical guarantees about the DNNs behavior, it is not clear whether the proofs generated by these methods are human-interpretable. 
In this paper, we bridge this gap by developing new concepts, algorithms, and representations to generate human understandable interpretations of the proofs. 
Leveraging the proposed method, we show that the robustness proofs of standard DNNs rely on spurious input features, while the proofs of DNNs trained to be provably robust filter out even the semantically meaningful features. The proofs for the DNNs combining adversarial and provably robust training are the most effective at selectively filtering out spurious features as well as relying on human-understandable input features. 
\end{abstract}

\section{Introduction}
The black box construction and lack of robustness of deep neural networks (DNNs) are major obstacles to their real-world deployment in safety-critical applications like autonomous driving~\cite{bojarski2016end_DUP} or medical diagnosis~\cite{AMATO201347}.
To mitigate the lack of trust caused by black-box behaviors, there has been a large amount of work on interpreting individual DNN predictions to gain insights into their internal workings. Orthogonally, the field of DNN verification has emerged to formally prove or disprove the robustness of neural networks in a particular region capturing an infinite set of inputs. Verification can be leveraged during training for constructing more robust models. 

We argue that while these methods do improve trust to a certain degree, the insights and guarantees derived from their independent applications are not enough to build sufficient confidence for enabling the reliable real-world deployment of DNNs. 
Existing DNN interpretation methods~\cite{sundararajan2017axiomatic} explain the model behavior on individual inputs, but they often do not provide human-understandable insights into the workings of the model on an infinite set of inputs handled by verifiers. 
Similarly, the DNN verifiers~\cite{singh2018robustness, zhang2018crown} can generate formal proofs capturing complex invariants sufficient to prove network robustness but it is not clear whether these proofs are based on any meaningful input features learned by the DNN that are necessary for correct classification. This is in contrast to standard program verification tasks where proofs capture the semantics of the program and property. In this work, to improve trust, we propose for the first time, the problem of interpreting the invariants captured by DNN robustness proofs. 


\noindent \textbf{Key Challenges.}
The proofs generated by state-of-the-art DNN verifiers 
encode high-dimensional complex convex shapes defined over thousands of neurons in the DNN. It is not exactly clear how to map these shapes to human understandable interpretations. 
Further, certain parts of the proof may be more important for it to hold than the rest. 
Thus we need to define a notion of importance for different parts of the proof and develop methods to identify them. 


\noindent \textbf{Our Contributions. }
We make the following contributions to overcome these challenges and develop a new method for interpreting DNN robustness proofs.
\begin{itemize}[noitemsep,nolistsep,leftmargin=*]
    \vspace{-0.2cm}
    \item We introduce a novel concept of proof features that can be analyzed independently by generating the corresponding interpretations. A priority function is then associated with the proof features
    that signifies their importance 
    in the complete proof. 
    \item We design a general algorithm called \textbf{\textit{\algname}} (\textbf{Su}fficient \textbf{P}roof \textbf{F}eature \textbf{Ex}traction) that extracts a set of proof features that retain only the more important parts of the proof while still proving the property. 
    \item We compare interpretations of the proof features for standard DNNs and state-of-the-art robustly trained DNNs for the MNIST and CIFAR10 datasets. We observe that the proof features corresponding to the standard networks rely on spurious input features while the proofs of adversarially trained DNNs~\cite{madry:17} filter out some of the spurious features. In contrast, the networks trained with certifiable training~\cite{ZhangCXGSLBH20} produce proofs that do not rely on any spurious features but they also miss out on some meaningful features. Proofs for training methods that combine both empirical robustness and certified robustness~\cite{BalunovicV:20} provide a common ground. They not only rely on human interpretable features but also selectively filter out the spurious ones. We also empirically show that these observations are not contingent on any specific DNN verifier. 
\end{itemize}

\section{Related Work}
We discuss prior works related to ours.
\\
\textbf{DNN interpretability.} There has been an extensive effort to develop interpretability tools for investigating the internal workings of DNNs. 
These include feature attribution techniques like saliency maps \cite{sundararajan2017axiomatic, DBLP:journals/corr/SmilkovTKVW17}, using surrogate models to interpret local decision boundaries \cite{ribeiro2016should}, finding influential \cite{koh2017understanding}, prototypical \cite{kim2016examples}, or counterfactual inputs \cite{goyal2019counterfactual}, training sparse decision layers \cite{wong2021leveraging}, utilizing robustness analysis \cite{hsieh2021evaluations}.
Most of these interpretability tools focus on generating local explanations that investigate how DNNs work on individual inputs.
Another line of work, rather than explaining individual inputs, tries to identify specific concepts associated with a particular neuron~\cite{DBLP:journals/corr/SimonyanVZ13, bau2020understanding}. 
However, to the best of our knowledge, there is no existing work that allows us to interpret DNN robustness proofs.
\\
\textbf{DNN verification.}
Unlike DNN interpretability methods, prior works in DNN verification focus on formally proving whether the given DNN satisfies desirable properties like robustness \cite{singh2018robustness, wang2021betacrown}, fairness \cite{DBLP:conf/sas/MazzucatoU21}, etc. 
The DNN verifiers are broadly categorized into three main categories - (i) sound but incomplete verifiers which may not always prove property even if it holds~\cite{gehr2018ai2,singh2018fast,singh2019abstract,singh2019beyond,zhang2018crown,Lirpa:20,DBLP:conf/nips/SalmanY0HZ19}, (ii) complete verifiers that can always prove the property if it holds ~\cite{wang2018neurify, gehr2018ai2, bunel2020branch, bunel2020efficient, DBLP:conf/cav/BakTHJ20, ehlers2017formal, ferrari2022complete, fromherz2021fast, wang2021beta, depalma2021scaling, anderson2020strong, zhang2022general} and (iii) verifiers with probabilistic guarantees \cite{randSmooth}. 
\\
\textbf{Robustness and interpretability.}
Existing works \cite{madry:17, BalunovicV:20, ZhangCXGSLBH20} in developing robustness training  methods for neural networks provide a framework to produce networks that are inherently immune to adversarial perturbations in input. 
Recent works \cite{tsipras2018robustness, pmlr-v97-zhang19p} also show that there may be a robustness-accuracy tradeoff that prevents highly robust models achieve high accuracy.
Further, in \cite{tsipras2018robustness} authors show that networks trained with adversarial training methods learn fundamentally different input feature representations than standard networks where the adversarially trained networks capture more human-aligned data characteristics.




\section{Preliminaries}
\label{sec:prelims}
In this section, we provide the necessary background on DNN verification and existing works on traditional DNN interpretability with sparse decision layers.
While our method is applicable to general architectures, for simplicity, we focus on a $l$-layer feedforward DNN $N : \mathbb{R}^{d_{0}} \rightarrow \mathbb{R}^{d_{l}}$ for the rest of this paper. Each layer $i$ except the final one applies the transformation $X_i=\sigma_i(W_i\cdot X_{i-1}+B_i)$ where $W_i \in \mathbb{R}^{d_{i} \times d_{i-1}}$ and $B_i\in \mathbb{R}^{d_{i}}$ are respectively the weights and biases of the affine transformation and $\sigma_i$is a non-linear activation like ReLU, Sigmoid, etc. corresponding to layer $i$.   
The final layer only applies the affine transformation and the network output is a vector $Y=W_l \cdot X_{l-1} + B_l$. 
\\
\noindent \textbf{DNN verification.}
At a high level, DNN verification involves proving that the network outputs $Y=N(X)$ corresponding to all inputs $X$ from an input region specified by $\inreg$, satisfy a logical specification $\outprop$.
A common property is - the local robustness where the output specification $\outprop$ is defined as 
linear inequality over the elements of the output vector of the neural network.   
The output specification, in this case, is written as $\outprop(Y) = (C^{T} Y \geq 0)$ where $C \in \mathbb{R}^{d_{l}}$ defines the linear inequality for encoding the robustness property.
For the rest of the paper, we refer to the input region $\inreg$ and output specification $\outprop$ together as \textit{property}  $(\inreg, \outprop)$. 
\\
Next, we briefly discuss how DNN robustness verifiers work. 
A DNN verifier $\ver$ 
symbolically computes a possibly over-approximated output region $\abstraction \subseteq \mathbb{R}^{d_{l}}$ containing all possible outputs of $N$ corresponding to $\inreg$.
Let, $\lb(\abstraction) = \min_{Y \in \abstraction} C^TY$ denote the minimum value of $C^TY$ where $Y \in \abstraction$. Then $N$ satisfies property $(\inreg, \outprop)$ if $\lb(\abstraction) \geq 0$.
Most existing DNN verifiers~\cite{singh2018fast,singh2019abstract,zhang2018crown} are exact for affine transformations. 
However, for non-linear activation functions, these verifiers compute convex regions that over-approximate the output of the activation function.
Note that, due to the over-approximations, DNN verifiers are sound but not complete - the verifier may not always prove property even if it holds.
For piecewise linear activation functions like ReLU, complete verifiers exist handling the activation exactly, which in theory always prove a property if it holds.
Nevertheless, complete verification in the worst case takes exponential time, making them practically infeasible.
In the rest of the paper, we focus on deterministic, sound, and incomplete verifiers which are more scalable than complete verifiers. \\
\noindent \textbf{DNN interpretation with sparse decision layer.} 
DNNs considered in this paper, have complex multi-layer structures, making them harder to interpret. 
Instead of interpreting what each layer of the network is doing, recent works \cite{wong2021leveraging, liao2022automated} treat DNNs as the composition of a \textit{deep feature extractor} and an affine \textit{decision layer}.
The output of each neuron of the penultimate layer represents a single deep feature and the final affine layer linearly combines these deep features to compute the network output.
This perspective enables us to identify the set of features used by the network to compute its output and to investigate their semantic meaning using the existing feature visualization techniques \cite{ribeiro2016should, DBLP:journals/corr/SimonyanVZ13}.
However, visualizing each feature is practically infeasible for large DNNs where the penultimate layer can contain hundreds of neurons.
To address this, the work of \cite{wong2021leveraging} tries to identify a smaller set of features that are sufficient to maintain the perfomance of the network. 
This smaller but sufficient feature set retains only the most important features corresponding to a given input. 
It is shown empirically \cite{wong2021leveraging} 
that a subset of these features of size less than 10 is sufficient to maintain the accuracy of state-of-the-art models.


\section{Interpreting DNN Proofs}
Next, we describe our approach for interpreting DNN robustness proofs.

\noindent \textbf{Proof features.} Similar to traditional DNN interpretation described above, for proof interpretation, we propose to segregate the final decision layer from the network and look at the features extracted at the penultimate layer. 
However, DNN verifiers work on an input region ($\inreg$) consisting of infinitely many inputs instead of a single input as handled by existing work. 
In this case, for a given input region $\inreg$, we look at the symbolic shape (for example - intervals, zonotopes, polytopes, etc.) computed by the verifier at the penultimate layer and then compute its projection on each dimension of the penultimate layer. 
These projections yield an interval $[l_n, u_n]$ which contains all possible output values of the corresponding neuron $n$ with respect to $\phi$.  
\vspace{-0.5mm}
\begin{definition}[Proof Features]
Given a network $N$, input region $\inreg$ and neural network verifier $\ver$, for each neuron $n_{i}$ at the  penultimate layer of $N$, the proof feature $\absFeature_{n_{i}}$ extracted at that neuron $n_i$ is an interval $[l_{n_{i}}, u_{n_{i}}]$ such that $\forall X \in \inreg$, the output of $n_{i}$ always lies in the range $[l_{n_{i}}, u_{n_{i}}]$. 
\end{definition}
Note that, the computation of the proof features is verifier dependent, i.e., for the same network and input region, different verifiers may compute different values $l_n$ and $u_n$ for a particular neuron $n$.
For any input region $\inreg$, the first $(l - 1)$ layers of $N$ along with the verifier $\ver$ act as the \textbf{proof feature extractor}.
For the rest of this paper, we use $\absFeature$ to denote the set of all proof features at the penultimate layer and $\suffFeature$ to denote the proof features corresponding to $S \subseteq [d_{l-1}]$.  
\begin{align*}
    \suffFeature = \{ \absFeature_{n_{i}} \;|\; i \in S\}
\end{align*}
Suppose $N$ is formally verified by the verifier $\ver$ to satisfy the property ($\inreg$, $\outprop$). 
Then in order to gain insights about the proof generated by $\ver$, we can directly investigate (described in section \ref{sec:featuresInvestigation}) the extracted proof features $\absFeature$. 
However, the number of proof features for contemporary networks can be very large (in hundreds). Many of these features may be spurious and not important for the proof. Similar to how network interpretations are generated when classifying individual inputs, we want to identify a smaller set of proof features that are more important for the proof of the property ($\inreg$, $\outprop$).
The key challenge here is defining the most important set of proof features w.r.t the property $(\inreg, \outprop)$.

\subsection{Sufficient Proof Features}
We argue that a \textit{minimum} set of proof features $\cannFeature \subseteq \absFeature$ that can prove the property $(\inreg, \outprop)$ with verifier $\ver$ contains an important set of proof features w.r.t $(\inreg, \outprop)$. 
The minimality of $\cannFeature$ enforces that it can only retain the proof features that are essential to prove the property.
Otherwise, it would be possible to construct a smaller set of proof features that preserves the property violating the minimality of $\cannFeature$.
Leveraging this hypothesis, we can model extracting a set of important proof features as computing a minimum proof feature set capable of preserving the property $(\inreg, \outprop)$ with $\ver$.
To identify a minimum proof feature set, we introduce the novel concepts of proof feature pruning and sufficient proof features below: 
\begin{definition}[Proof feature Pruning]
\label{def:pruning}
Pruning any Proof feature $\absFeature_{n_i} \in \absFeature$ corresponding to neuron $n_{i}$ in the penultimate layer involves setting weights of all its outgoing connections to 0 so that given any input $X \in \inreg$ the final output of $N$ no longer depends on the output of 
 ${n_i}$.
\end{definition}
Once, a proof feature $\absFeature_{n_i}$ is pruned the verifier $\ver$ no longer uses $\absFeature_{n_i}$ to prove the property $(\inreg, \outprop)$.
\begin{definition}[Sufficient proof features]
\label{def:sufficent}
For the proof of property $(\inreg, \outprop)$ on DNN $N$ with verifier $\ver$, a nonempty set $\suffFeature \subseteq \absFeature$ of proof features is sufficient if  the property still holds with verifier $\ver$ even if all the proof features \textbf{not in} $\suffFeature$ are pruned. 
\end{definition}
\begin{definition}[Minimum proof features]
Minimum proof feature set $\cannFeature \subseteq \absFeature$ for a network $N$ verified with $\ver$ on $(\inreg, \outprop)$ is a sufficient proof feature set containing the minimum number of proof features. 
\end{definition}
Extracting a minimum set of proof features $\cannFeature$ from $\absFeature$ is equivalent to pruning maximum number of proof features from $\absFeature$ without violating the property $(\inreg, \outprop)$. 
Let, $W_{l}[:,i] \in \mathbb{R}^{d_{l}}$ denote the $i$-th column of the weight matrix $W_l$ of the final layer $N_{l}$. Pruning any proof feature $\absFeature_{n_{i}}$ results in setting all weights in $W_l[:,i]$ to 0.
Therefore, to compute $\cannFeature$, it is sufficient to devise an algorithm that can prune maximum number of columns from $W_l$ while still preserving the property $(\inreg, \outprop)$.
\\
For any proof feature set $\absFeature_{S} \subseteq \absFeature$, let $W_{l}(S) \in \mathbb{R}^{d_{l} \times d_{l-1}}$ be the weight matrix of the pruned final layer that only retains proof features corresponding to $\suffFeature$. 
Then columns of $W_{l}(S)$ are defined as follows where $\underline{0} \in \mathbb{R}^{d_{l-1}}$ dentoes a constant all-zero vector
\begin{align}
\label{eq:prunedLayer}
W_{l}(S)[:,i] = \begin{cases} 
          W_{l}[:,i] & i \in S \\
          \;\;\underline{0} & \text{otherwise} 
       \end{cases}    
\end{align}
The proof feature set $\absFeature_{S}$ is sufficient iff the property $(\inreg, \outprop)$ can be verified by $\ver$ on $N$ with the pruned weight matrix $W_{l}(S)$.
As described in Section~\ref{sec:prelims}, for property verification the verifier computes $\ver$ an over-approximated output region $\abstraction$ of $N$ over the input region $\phi$.
Given that we never change the input region $\inreg$ and the proof feature extractor composed of the first $l - 1$ layers of $N$ and the verifier $\ver$, the output region $\abstraction$ only depends on the pruning done at the final layer.   
Now let $\abstraction(W_{l}, S)$ denote the over-approximated output region corresponding to $W_{l}(S)$. 
The neural network $N$ can be verified by $\ver$ on the property $(\inreg, \outprop)$ with $W_{l}(S)$ iff the lower bound  $\lb(\abstraction(W_{l}, S)) \geq 0$.
Therefore, finding $S_{0}$ corresponding to a minimum proof feature set $\cannFeature$ can be formulated as below where for any $S \subseteq [d_{l-1}]$, $|S|$ denotes the number of elements in $S$. 
\begin{align}
    \argmin_{S \neq \emptyset,\; S \subseteq [d_{l-1}]} |S| \;\;\;\text{s.t.}\;\;\lb(\abstraction(W_{l}, S)) \geq 0 
\end{align}

\subsection{Approximate Minimum Proof Feature Extraction}
\label{sec:minFeatureExtraction}
The search space for finding $\cannFeature$ is prohibitively large and contains $2^{d_{l-1}}$ possible candidates. So, computing a minimum solution with an exhaustive search is infeasible.
Even checking the sufficiency of any arbitrary proof feature set $\suffFeature$ (Definition~\ref{def:sufficent}) is not trivial and requires expensive verifier invocations. We note that even making $O(d_{l-1})$ verifier calls is too expensive for the network sizes considered in our evaluation.
Given the large DNN size, exponential search space, and high verifier cost, efficiently computing a \textit{minimum} sufficient proof feature set is computationally intractable.
We design a practically efficient approximation algorithm based on a greedy heuristic that can generate a smaller (may not always be minimum) sufficient feature set with only $O(\log(d_{l-1}))$ verifier calls.  
At a high level, for each proof feature $\absFeature_{n_i}$ contained in a sufficient feature set, the heuristic tries to estimate whether pruning $\absFeature_{n_i}$ violates the property $(\inreg, \outprop)$ or not. 
Subsequently, we prioritize pruning of those proof features $\absFeature_{n_i}$ that, if pruned, will likely preserve the proof of the property ($\inreg$,$\outprop$) with the verifier $\ver$.


For any proof feature $\absFeature_{n_i} \in \suffFeature$ where $\suffFeature$ is sufficient and proves the property $(\phi, \psi)$, we estimate the change $\contri(\absFeature_{n_i}, \suffFeature)$ that occurs to $\lb(\abstraction(W_{l}, S))$ if $\absFeature_{n_i}$ is pruned from $\suffFeature$.
Let, the over-approximated output region computed by verifier $\ver$ corresponding to $\suffFeature \setminus \{\absFeature_{n_i}\}$ be $\abstraction(W_{l}, S \setminus \{i\})$ then $\contri(\absFeature_{n_i}, \suffFeature)$ is defined as follows
\begin{equation*}
\contri(\absFeature_{n_i}, \suffFeature) = |\lb(\abstraction(W_{l}, S)) - \lb(\abstraction(W_{l}, S \setminus \{i\}))|
\end{equation*}
Intuitively, proof features $\absFeature_{n_i}$ with higher values of $\contri(\absFeature_{n_i}, \suffFeature)$ for some sufficient feature set $\suffFeature \subseteq \absFeature$ are responsible for large changes to $\lb(\abstraction(W_{l}(S)))$ and likely to break the proof if pruned.
Note, $\contri(\absFeature_{n_i}, \suffFeature)$ depends on the particular sufficient proof set $\suffFeature$ and does not estimate the global importance of $\absFeature_{n_i}$ independent of the choice of $\suffFeature$.
To mitigate this issue, while defining the priority $P(\absFeature_{n_i})$ of a proof feature $\absFeature_{n_i}$ we take the maximum of $\contri(\absFeature_{n_i}, \suffFeature)$ across all sufficient feature set $\suffFeature$ containing $\absFeature_{n_i}$. 
Let, $\fset{\absFeature_{n_i}}$ denote set of all sufficient $\suffFeature$ containing $\absFeature_{n_i}$. 
Then, $P(\absFeature_{n_i})$ can be formally defined as follows
\begin{equation}
P(\absFeature_{n_{i}}) = \max_{\suffFeature \in \fset{\absFeature_{n_i}}} \contri(\absFeature_{n_{i}}, \suffFeature)
\end{equation}
Given the set $\fset{\absFeature_{n_i}}$ can be exponentially large,
finding the maximum value of $\contri(\absFeature_{n_i}, \suffFeature)$ over $\fset{\absFeature_{n_i}}$ is practically infeasible.
Instead, we compute a resonably tight upper bound $P_{ub}(\absFeature_{n_i})$ on  $P(\absFeature_{n_i})$ by estimating a global upper bound on $\contri(\absFeature_{n_i}, \suffFeature)$, that holds $\forall \suffFeature \in \fset{\absFeature_{n_i}}$.
The proposed upper bound is independent of the choice of $\absFeature_{S} \in \fset{\absFeature_{n_i}}$ and therefore removes the need to iterate over $\fset{\absFeature_{n_i}}$ enabling efficient computation.   
For the network $N$ and input region $\phi$, let $\abstraction_{l-1}$ denote the over-approximate symbolic region computed by $\ver$ at the penultimate layer.
Then $\forall \suffFeature \in \fset{\absFeature_{n_i}}$ the global uppper bound of $\contri(\absFeature_{n_{i}}, \suffFeature)$ can be computed as follows where for any vector $X \in \mathbb{R}^{d_{l-1}}$, $x_i$ denotes its $i$-th coordinate:
\begin{align*}
 \contri(\absFeature_{n_{i}}, \suffFeature) &\leq  \max_{X \in \abstraction_{l-1}} |(C^TW_{l}(S)X - C^TW_{l}(S \setminus \{i\})X)| \\
 &= \max_{X \in \abstraction_{l-1}} |(C^T W_{l}[:,i]) \cdot x_{i})| \\
 P(\absFeature_{n_i}) &\leq \max_{X \in \abstraction_{l-1}} |(C^T W_{l}[:,i]) \cdot x_{i})|
\end{align*}
Now, any proof feature $\absFeature_{n_i} = [l_{n_i}, u_{n_i}]$ computed by $\ver$ contains all possible values of $x_i$ where $X \in \abstraction_{l-1}$. 
Leveraging this observation, we can further simplify the upper bound  $P_{ub}(\absFeature_{n_i})$ of $P(\absFeature_{n_i})$ as shown below.
\begin{gather}
    P(\absFeature_{n_i}) \leq \max_{x_i \in [l_{n_i}, u_{n_i}]} |(C^T W_{l}[:,i])| \cdot x_{i})| \notag \\
 P_{ub}(\absFeature_{n_i}) = |(C^T W_{l}[:,i])| \cdot \max(|l_{n_i}|, |u_{n_i}|) \label{eq:apxpriority}
\end{gather}
This simplification ensures that $P_{ub}(\absFeature_{n_i})$ for all $\absFeature_{n_i}$ can be computed with $O(d_{l-1})$ elementary vector operations and a single verifier call that computes the intervals $[l_{n_i}, u_{n_i}]$.
Next, we describe how we compute an approximate feature set using the feature priorities $P_{ub}(\absFeature_{n_i})$. 
For any feature $\absFeature_{n_i}$, $P_{ub}(\absFeature_{n_i})$ estimates the importance of $\absFeature_{n_i}$ in preserving the proof.
So, a trivial step is to just prune all the proof features from $\absFeature$ whose $P_{ub}$ is 0. These features do not have any contribution to the proof of the property $(\inreg, \outprop)$ by the verifier $\ver$. 
This step forms a trivial algorithm. 
However, this is not enough. We can further prune some more proof features leading to a yet smaller set.  For this, we propose an iterative algorithm \textbf{\algname} shown in Algorithm~\ref{alg:featureComputation} ($\alg$) which
maintains two set namely, $\absFeature^{(\alg)}_{S_0}$ and $\absFeature^{(\alg)}_{S}$. $\absFeature^{(\alg)}_{S_0}$ contains the features guaranteed to be included in the final answer computed by \algname and $\absFeature^{(\alg)}_{S}$ contains the candidate features to be pruned by the algorithm. 
At every step, the algorithm ensures that the set $\absFeature^{(\alg)}_{S} \cup \absFeature^{(\alg)}_{S_0}$ is sufficent and iteratively reduces its size by pruning proof features from $\absFeature^{(\alg)}_{S}$.
The algorithm iteratively prunes the feature $\absFeature_{n_i}$ with the lowest value of $P_{ub}(\absFeature_{n_i})$ from $\absFeature^{(\alg)}_{S}$ to maximize the likelihood that $\absFeature^{(\alg)}_{S} \cup \absFeature^{(\alg)}_{S_0}$ remains sufficient at each step.
At Line 8 in the algorithm,  $\absFeature^{(\alg)}_{S_0}$ and $\absFeature^{(\alg)}_{S}$ are initialized as $\{\}$ (empty set) and $\absFeature$ respectively. 
%
Removing a single feature in each iteration and checking the sufficiency of the remaining features in the worst case leads to $O(d_{l-1})$ verifier calls which are infeasible.
Instead, at each step, from $\absFeature^{(\alg)}_{S}$ our algorithm greedily picks top-$|S|/2$ features (line 10) $\absFeature^{(\alg)}_{S_1}$ based on their priority and invokes the verifier $\ver$ to check the sufficiency of $\absFeature^{(\alg)}_{S_0} \cup \absFeature^{(\alg)}_{S_1}$ (line 12). 
If the feature set $\absFeature^{(\alg)}_{S_0} \cup \absFeature^{(\alg)}_{S_1}$ is sufficient (line 13), $\alg$ removes all features in $\absFeature^{(\alg)}_{S} \setminus \absFeature^{(\alg)}_{S_1}$ from $\absFeature^{(\alg)}_{S}$ and therefore  $\absFeature^{(\alg)}_{S}$ is updated as $\absFeature^{(\alg)}_{S_1}$ in this step (line 14).
Otherwise, if $\absFeature^{(\alg)}_{S_0} \cup \absFeature^{(\alg)}_{S_1}$ does not preserve the property ($\inreg$,$\outprop$) (line 15), $\alg$ adds all feature in $\absFeature^{(\alg)}_{S_1}$ to $\absFeature^{(\alg)}_{S_0}$ (line 16) and replaces $\absFeature^{(\alg)}_{S}$ with $\absFeature^{(\alg)}_{S} \setminus \absFeature^{(\alg)}_{S_1}$ (line 17).
The algorithm ($\alg$) terminates after all features in $\absFeature^{(\alg)}_{S}$ are exhausted. 
Since at every step, the algorithm reduces size of $\absFeature^{(\alg)}_{S}$ by half, it always terminates within $O(\log(d_{l-1}))$ verifier calls.
\begin{algorithm}[tb]
   \caption{Approx. minimum proof feature computation}
   \label{alg:featureComputation}
\begin{algorithmic}[1]
   \STATE {\bfseries Input:}  network $N$, property $(\inreg, \outprop)$, verifier $\ver$.
   \STATE {\bfseries Output:}  approximate minimal proof features $\absFeature^{(\alg)}_{S_0}$,
   \IF{ $\ver$ can not verify $N$ on $(\inreg, \outprop)$}
        \STATE \textbf{return}
    \ENDIF
   \STATE \text{Calculate all proof features for input region $\inreg$.}
   \STATE \text{Calculate priority $P_{ub}(\absFeature_{n_i})$ all proof features $\absFeature_{n_i}$.}   
   \STATE \textbf{Initialization: } $\absFeature^{(\alg)}_{S_0} = \{\}$,\; $\absFeature^{(\alg)}_{S} = \absFeature$ \label{algline:ini}
   \WHILE{$\absFeature^{(\alg)}_{S}$ is not empty}
    \STATE $\absFeature^{(\alg)}_{S_1} = \text{top-$|S|/2$ features selected based on $P_{ub}(\absFeature_{n_i})$}$ \label{algline:topFeature}
    \STATE $\absFeature^{(\alg)}_{S_2} = \absFeature^{(\alg)}_{S} \setminus \absFeature^{(\alg)}_{S_1}$
    \STATE \text{Check sufficiency of $\absFeature^{(\alg)}_{S_0} \cup \absFeature^{(\alg)}_{S_1}$ with $\ver$ on $(\inreg, \outprop)$}
    \IF{ $\absFeature^{(\alg)}_{S_0} \cup \absFeature^{(\alg)}_{S_1}$ is sufficient}
    \STATE $\absFeature^{(\alg)}_{S} = \absFeature^{(\alg)}_{S_1}$ \hfill\COMMENT{all features in $\absFeature_{S_2}
    $ are pruned}
    \ELSE
    \STATE $\absFeature^{(\alg)}_{S_0} = \absFeature^{(\alg)}_{S_0} \cup \absFeature^{(\alg)}_{S_1}$
    \STATE $\absFeature^{(\alg)}_{S} = \absFeature^{(\alg)}_{S_2}$
    \ENDIF
   \ENDWHILE
\STATE \textbf{return} proof features $\absFeature^{(\alg)}_{S_0}$.
\end{algorithmic}
\end{algorithm}

\textbf{Limitations.} We note that the scalability of our method is ultimately limited by the scalability of the existing verifiers. Therefore, \algname currently cannot handle networks for larger datasets like ImageNet. Nonetheless, \algname is general and compatible with any verification algorithm. Therefore, \algname will benefit from any future advances to enable the neural network verifiers to scale to larger datasets.

Next, we derive mathematical guarantees about the correctness and efficacy of Algorithm~\ref{alg:featureComputation}. 
For correctness, we prove that the feature set $\absFeature^{(\alg)}_{S_0}$ is always sufficient (Definition \ref{def:sufficent}).
For efficacy,  we theoretically find a non-trivial upper bound on the size of $\absFeature^{(\alg)}_{S}$. 

\begin{theorem}
\label{thm:sufficiencyProof}
If the verifier $\ver$ can prove the property $(\inreg,\outprop)$ on the network $N$, then $\cannFeature^{(\alg)}$ computed by Algorithm \ref{alg:featureComputation} is sufficient (Definition \ref{def:sufficent}). 
\end{theorem}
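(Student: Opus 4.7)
The plan is to prove a loop invariant by induction on the iteration count, showing that the union $\absFeature^{(\alg)}_{S_0} \cup \absFeature^{(\alg)}_{S}$ remains a sufficient proof feature set throughout the execution of Algorithm~\ref{alg:featureComputation}. Once this invariant is established, the conclusion follows immediately from the termination condition of the \textbf{while} loop (line 9), which forces $\absFeature^{(\alg)}_{S} = \emptyset$, so $\absFeature^{(\alg)}_{S_0} = \absFeature^{(\alg)}_{S_0} \cup \absFeature^{(\alg)}_{S}$ is sufficient by the invariant.

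For the base case, I would appeal to the hypothesis that $\ver$ verifies $(\inreg, \outprop)$ on $N$ (so lines 3--5 do not trigger an early exit). After the initialization on line 8, we have $\absFeature^{(\alg)}_{S_0} \cup \absFeature^{(\alg)}_{S} = \emptyset \cup \absFeature = \absFeature$, which is trivially sufficient by Definition~\ref{def:sufficent}, because pruning features outside of $\absFeature$ prunes nothing and the verifier already proves the property on the unmodified network $N$. Equivalently, $W_l(\,[d_{l-1}]\,) = W_l$, so $\lb(\abstraction(W_l, [d_{l-1}])) \geq 0$ by assumption.

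For the inductive step, suppose the invariant holds at the start of some iteration. The algorithm selects $\absFeature^{(\alg)}_{S_1} \subseteq \absFeature^{(\alg)}_{S}$ and defines $\absFeature^{(\alg)}_{S_2} = \absFeature^{(\alg)}_{S} \setminus \absFeature^{(\alg)}_{S_1}$. There are two branches to analyze. In the \emph{sufficient} branch (lines 13--14), the explicit verifier check at line 12 guarantees that $\absFeature^{(\alg)}_{S_0} \cup \absFeature^{(\alg)}_{S_1}$ is sufficient; after the update $\absFeature^{(\alg)}_{S} \leftarrow \absFeature^{(\alg)}_{S_1}$, the new union equals exactly this sufficient set, so the invariant is preserved. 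In the \emph{insufficient} branch (lines 15--17), we update $\absFeature^{(\alg)}_{S_0} \leftarrow \absFeature^{(\alg)}_{S_0} \cup \absFeature^{(\alg)}_{S_1}$ and $\absFeature^{(\alg)}_{S} \leftarrow \absFeature^{(\alg)}_{S_2}$; a simple set-theoretic identity then gives the new union as $(\absFeature^{(\alg)}_{S_0} \cup \absFeature^{(\alg)}_{S_1}) \cup (\absFeature^{(\alg)}_{S} \setminus \absFeature^{(\alg)}_{S_1}) = \absFeature^{(\alg)}_{S_0} \cup \absFeature^{(\alg)}_{S}$, which is sufficient by the inductive hypothesis.

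There is no real obstacle here; the argument is a bookkeeping induction. The only subtlety worth flagging is that the invariant must be stated for the union $\absFeature^{(\alg)}_{S_0} \cup \absFeature^{(\alg)}_{S}$ rather than for $\absFeature^{(\alg)}_{S_0}$ alone, since $\absFeature^{(\alg)}_{S_0}$ starts empty and is not sufficient until the loop finishes. The insufficient branch is where the choice of invariant matters most: we cannot conclude that $\absFeature^{(\alg)}_{S_0} \cup \absFeature^{(\alg)}_{S_1}$ alone is sufficient (the verifier just told us it is not), but together with the surviving candidates $\absFeature^{(\alg)}_{S_2}$ it equals the previous sufficient union, which is exactly what the loop needs to continue safely until $\absFeature^{(\alg)}_{S}$ is exhausted.
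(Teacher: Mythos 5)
Your proposal is correct and matches the paper's own proof essentially verbatim: the same loop invariant that $\absFeature^{(\alg)}_{S_0} \cup \absFeature^{(\alg)}_{S}$ remains sufficient, proved by induction on the while-loop iterations with the same two-branch case analysis, and the same conclusion at termination when $\absFeature^{(\alg)}_{S}$ is exhausted. Your added remark about why the invariant must be stated for the union rather than for $\absFeature^{(\alg)}_{S_0}$ alone is a helpful clarification but does not change the argument.
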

This follows from the fact that \algname Algorithm ensures at each step that $\absFeature^{\mathbb{(A)}}_{S_0} \cup \absFeature^{\mathbb{(A)}}_{S}$ is sufficient.  
Hence, at termination the feature set $\absFeature^{(\alg)}_{S_0}$ is sufficient.
The complete proof of Theorem~\ref{thm:sufficiencyProof} is in appendix~\ref{appendix:proofs}.
Next, we find a non-trivial upper bound on the size of $\absFeature^{(\alg)}_{S}$ computed by the algorithm. 
\vspace{-3mm}
\begin{definition}
\label{def:zeroproofFeatures}
For $\absFeature$, zero proof features set $Z(\absFeature)$ denotes the  proof features $\absFeature_{n_i} \in \absFeature$ with $P_{ub}(\absFeature_{n_i}) = 0$.
\end{definition}
\vspace{-3mm}
Note, any proof feature $\absFeature_{n_i} \in Z(\absFeature)$ can be trivially removed without breaking the proof.
Further, we show that some additional proof features will be filtered out from the original proof feature set.
So, the size of the proof feature set $\cannFeature^{(\alg)}$ extracted by \algname is guaranteed to be less than the value computed in  Theorem~\ref{thm:efficacyBound}.
\begin{table*}[htbp]
\centering
\captionsetup{justification=centering}
\caption{\algname Efficacy Analysis}
\label{table:compareoriginal}
\resizebox{0.98\textwidth}{!}{
\begin{tabular}{@{}l l l l l c c c c r r@{}}
\toprule
Dataset & Training & Input & No. of &Original & \multicolumn{2}{c}{Proof} & \multicolumn{2}{c}{Proof} & No. of proofs & No. of proofs \\ 
\text{ } & Method & Region ($\phi$) & proved & Feature  & \multicolumn{2}{c}{Feature Count} & \multicolumn{2}{c}{Feature Count} & with $\leq 5$ & with $\leq 10$ \\
\text{ } & \text{} &  \text{eps ($\epsilon$)} & properties & Count & \multicolumn{2}{c}{(Baseline)} & \multicolumn{2}{c}{(\algname)} & proof features & proof features \\ 
\text{ } & \text{} & \text{} & \text{} &\text{} & Mean & Median & Mean & Median & (\algname) & (\algname) \\ 
\midrule
\textbf{MNIST} &Standard & 0.02 & 297 & 256 & 23.19 & 19 & 2.23 & 2 & 291 & 297 \\
 \text{} & PGD Trained  & 0.02 & 410 &1000 & 218.02 & 218 & 5.57 & 3 & 317 & 365 \\
\text{} & COLT & 0.02 & 447 &250 & 44.43 & 45 & 7.37 & 6 & 217 & 351 \\
\text{ } & CROWN-IBP & 0.02 & 482 & 128 & 42.38 & 43 & 5.84 & 4 & 331 & 400  \\
\midrule
\textbf{MNIST}  & PGD Trained  & 0.1 & 163 & 1000 & 279.31 & 278 & 5.29 & 3 & 131
 & 149 \\
\text{} & COLT & 0.1 & 215 & 250 & 51.01 & 51 & 5.97 & 5 & 133 & 203 \\
\text{ } & CROWN-IBP & 0.1 & 410 & 128 & 47.92 & 48 & 5.86 & 4 & 267 & 343 \\
\midrule
\textbf{CIFAR-10} &Standard & $0.2/255$ & 255 & 100 & 52.93 & 53 & 10.38 & 7 & 120  & 164 \\
\text{} & PGD Trained  & $0.2/255$ & 235 & 100 & 54.29 & 54 & 8.04 & 3 & 155 & 177 \\
\text{} & COLT & $0.2/255$ & 265 & 250 & 77.71 & 78 & 9.12 & 4 & 148 & 192 \\
\text{ } & CROWN-IBP & $0.2/255$ & 265 & 256 & 20.23 & 21 & 5.30 & 3 & 179 & 222 \\
\midrule
\textbf{CIFAR-10} & PGD Trained  & $2/255$ & 133 & 100 & 108 & 65 & 7.06 & 3 & 108 & 118 \\
\text{} & COLT & $2/255$ & 228 & 250 & 86.62 & 86 & 8.65 & 4 & 127 & 160  \\
\text{ } & CROWN-IBP & $2/255$ & 188 & 256 & 23.03 & 23 & 4.31 & 3 & 140 & 173 \\
\midrule
\end{tabular}}
\vspace{-4mm}
\end{table*}
\begin{theorem}
\label{thm:efficacyBound}
Let, $P_{max}$ denote the maximum of all priorities $P_{ub}(\absFeature_{n_i})$ over $\absFeature$.
Given any network $N$ is verified on $(\inreg, \outprop)$ with verifier $\ver$ then $|\cannFeature^{(\alg)}|\leq d_{l-1} - |Z(\absFeature)| - \lfloor \frac{\lb(\abstraction)}{P_{max}} \rfloor $
\end{theorem}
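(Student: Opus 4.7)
My plan is to analyze \algname through a structural characterization of its evolving state, and then combine that characterization with a simple existence argument. First, I would establish by induction on the iteration count the invariant that, when features are sorted in decreasing order of $P_{ub}$, both $\absFeature^{(\alg)}_{S_0}$ and $\absFeature^{(\alg)}_{S_0} \cup \absFeature^{(\alg)}_{S}$ are contiguous top-prefixes of this ordering. The base case is immediate from the initialization at Line~\ref{algline:ini}. For the inductive step, since $\absFeature^{(\alg)}_{S_1}$ is by construction the top half of $\absFeature^{(\alg)}_{S}$ in priority order (Line~\ref{algline:topFeature}), extending the prefix $\absFeature^{(\alg)}_{S_0}$ by $\absFeature^{(\alg)}_{S_1}$ keeps the prefix structure; both the sufficient branch (Line~14) and the insufficient branch (Lines~16--17) then leave the new $\absFeature^{(\alg)}_{S_0}$ and $\absFeature^{(\alg)}_{S_0}\cup\absFeature^{(\alg)}_{S}$ as contiguous top-prefixes. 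Combined with the invariant that $\absFeature^{(\alg)}_{S_0} \cup \absFeature^{(\alg)}_{S}$ is always sufficient (from the proof of Theorem~\ref{thm:sufficiencyProof}), \algname is effectively performing a binary search for the smallest prefix size $p^{\star}$ such that the top-$p^{\star}$ features (ordered by $P_{ub}$) form a sufficient set; hence $|\cannFeature^{(\alg)}| \leq p^{\star}$ at termination.

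Next, I would upper bound $p^{\star}$ by exhibiting a concrete sufficient prefix of size $d_{l-1}-k$, where $k = |Z(\absFeature)| + \lfloor \lb(\abstraction)/P_{max} \rfloor$. Let $T\subseteq\absFeature$ be the $k$ features with the smallest $P_{ub}$ values; by definition of $Z(\absFeature)$ and $P_{max}$ we have $Z(\absFeature)\subseteq T$, and every element of $T\setminus Z(\absFeature)$ has priority at most $P_{max}$. The same chain of inequalities that produced Equation~(\ref{eq:apxpriority}), but now applied to $T$ as a whole and combined with the elementary inequality $|\min_X f(X) - \min_X g(X)| \leq \max_X |f(X)-g(X)|$ for the affine objective $C^{T} W_{l}(\cdot)\,X$, generalizes additively to
\begin{align*}
\bigl|\lb(\abstraction) - \lb(\abstraction(W_l, [d_{l-1}]\setminus T))\bigr| \;\leq\; \sum_{f\in T} P_{ub}(f) \;\leq\; \lfloor \lb(\abstraction)/P_{max}\rfloor \cdot P_{max} \;\leq\; \lb(\abstraction).
\end{align*}
Since the original $\lb(\abstraction)\geq 0$, this forces $\lb(\abstraction(W_l,[d_{l-1}]\setminus T))\geq 0$, so $\absFeature\setminus T$ is sufficient. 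This exhibited set is a top-prefix of size $d_{l-1}-k$ in the $P_{ub}$ ordering, so $p^{\star}\leq d_{l-1}-k$ and the bound on $|\cannFeature^{(\alg)}|$ follows.

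I expect the main obstacle to lie in the first step: pinning down the contiguous-prefix invariant rigorously is delicate because of the rounding hidden in ``top-$|S|/2$'' when $|S|$ is odd and in the very last iterations where $|S|$ collapses to $1$. A careful choice of rounding convention (or an explicit special case for $|S|=1$) is needed to make sure the binary search genuinely converges to the smallest sufficient prefix rather than to a slightly larger value; any looseness here would translate directly into slack in the bound. By contrast, the additive-priority step is essentially the multi-feature extension of a derivation already performed in the excerpt and should go through without surprises.
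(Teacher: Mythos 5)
There is a genuine gap in your first step, and it is not the rounding issue you flag but the binary-search claim itself. Your reduction rests on the assertion that \algname converges to the smallest sufficient prefix $p^{\star}$ in the $P_{ub}$ ordering, so that $|\cannFeature^{(\alg)}| \leq p^{\star}$. This implicitly assumes that prefix sufficiency is monotone --- that every prefix longer than a sufficient one is again sufficient --- and that is false. Re-inserting a feature $i$ adds the term $(C^{T}W_{l}[:,i])\,x_{i}$ to the objective minimized over $\abstraction_{l-1}$, and this term can be negative (indeed, high-priority features are precisely those with large potential influence of either sign), so $\lb(\abstraction(W_{l},S))$ can drop below $0$ when a prefix is extended. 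Concretely, one can have a short sufficient prefix, a longer insufficient one, and a sufficient full set; in that situation the insufficient branch (lines 15--17) fires on the longer prefix, $\absFeature^{(\alg)}_{S_0}$ absorbs features past position $p^{\star}$, and the algorithm terminates with $|\cannFeature^{(\alg)}| > p^{\star}$. So exhibiting one sufficient prefix of size $d_{l-1}-k$ does not, by itself, bound the output of the algorithm.

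The repair is small, and it is exactly the route the paper takes. Your additive estimate in the second step proves more than the existence of one good prefix: for \emph{every} prefix whose complement lies inside the $k$-element low-priority tail $T$, the complement's priority sum is at most $\sum_{f \in T} P_{ub}(f) \leq \lfloor \lb(\abstraction)/P_{max}\rfloor \cdot P_{max} \leq \lb(\abstraction)$, hence (via $|\min_X f - \min_X g| \leq \max_X|f-g|$ and $\lb(\abstraction) \geq 0$) every such prefix is sufficient. Combined with your contiguous-prefix invariant --- which is correct and which the paper also uses implicitly --- this means the insufficiency check can never fail on a candidate $\absFeature^{(\alg)}_{S_0} \cup \absFeature^{(\alg)}_{S_1}$ that extends into the tail, so a straightforward induction shows $\absFeature^{(\alg)}_{S_0}$ never contains a tail element, and $|\cannFeature^{(\alg)}| \leq d_{l-1} - |Z(\absFeature)| - \lfloor \lb(\abstraction)/P_{max}\rfloor$ follows no matter where the search actually stops. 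This is the paper's proof: its Lemmas bound $\dlower(\suffFeature)$ by the pruned features' priority sum and conclude sufficiency whenever $\dlower(\suffFeature) \leq \lb(\abstraction)$, and its induction establishes $\cannFeature^{(\alg)} \cap p' = \emptyset$ for the tail $p'$. Your rounding concern about top-$|S|/2$ is legitimate but secondary: it affects termination in the $|S|=1$ corner case, not the bound, precisely because the invariant-based argument never needs the search to converge to a minimal prefix.
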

The exact proof for Theorem~\ref{thm:efficacyBound} can be found in Appendix~\ref{appendix:proofs}
\subsection{Interpreting proof features}
\label{sec:featuresInvestigation}
For interpreting proofs of DNN robustness, we now develop methods to analyze the semantic meaning of the extracted proof features.
There exists a plethora of works that compute local DNN explanations \cite{sundararajan2017axiomatic, DBLP:journals/corr/SmilkovTKVW17}.
However, these techniques are insufficient to generate an explanation w.r.t an input region.
To mitigate this, we adapt the existing local visualization techniques for visualizing the extracted proof features.  
Given a proof feature $\absFeature_{n_i}$, we intend to compute $\grad(\absFeature_{n_i}, \phi) = \expect_{X \sim \inreg} \grad(n_i, X)$ which is the mean gradient of the output of $n_i$ w.r.t the inputs from $\inreg$.
For each input dimension (pixel in case of images) $j \in [d_{0}]$ the $j$-th component of $\grad(\absFeature_{n_i}, \inreg)$ estimates its relevance w.r.t proof feature $\absFeature_{n_i}$ - the higher is the gradient value, the higher is its relevance.
Considering that the input region $\inreg$ contains infinitely many inputs, exactly computing $\grad(\absFeature_{n_i}, \inreg)$ is impossible. Rather, we statistically estimate $\grad(\absFeature_{n_i}, \inreg)$ by a resonably large sample $X_{S}$ drawn uniformly from $\inreg$. 

\section{Experimental Evaluation}
\subsection{Experimental setup} 
For evaluation we use convolutional networks trained on two popular datasets - MNIST \cite{mnist89} CIFAR-10 \cite{krizhevsky2009learning} shown in Table~\ref{table:compareoriginal}.  The networks are trained with standard training and three state-of-the-art robust training methodologies - adversarial training (PGD training) \cite{madry:17}, certified robust training (CROWN-IBP) \cite{ZhangCXGSLBH20} and a combination of both adversarial and certified training (COLT) \cite{BalunovicV:20}.
For our experiments, we use pre-trained publically available networks - the standard and PGD-trained networks are taken from the ERAN project \cite{singh2018robustness}, COLT-trained networks from COLT website \cite{BalunovicV:20}, and CROWN-IBP trained networks from the CROWN-IBP repository \cite{ZhangCXGSLBH20}.  
Similar to most of the existing works on neural network verification~\cite{carlini2017towards, singh2018robustness}, we use $L_{\infty}$-based local robustness properties.
Here, the input region $\inreg$ contains all images obtained by perturbing the intensity of each pixel in the input image independently within a bound $\epsilon \in \mathbb{R}$. $\outprop$ specifies a region where the network output for the correct class is higher than all other classes. 
We use $\epsilon_{train} = 0.3$ for all robustly trained MNIST networks and $\epsilon_{train} = 8/255$ for all robustly trained CIFAR-10 networks.
Unless specified otherwise, the proofs are generated by running the popular DeepZ~\cite{singh2018robustness} verifier.
We perform all our experiments on a 16-core 12th-gen i7 machine with 16 GB of RAM. 

\subsection{Efficacy of \algname Algorithm}
In this section, we experimentally evaluate the efficacy of the \algname based on the size of the output sufficient feature sets.
Given that there is no existing work for pruning proof feature sets, we use the upper bound computed in Theorem~\ref{thm:efficacyBound} as the baseline.
Note that this bound 
\begin{figure}[h]
    \centering
    \begin{subfigure}[b]{0.5\textwidth}
         \centering
         \includegraphics[width=0.9\textwidth, height=0.45\textwidth]{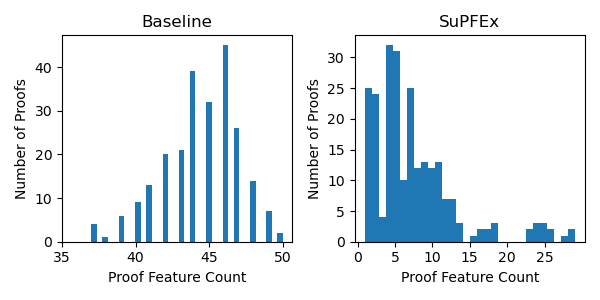}
         \vspace{-2mm}
         \caption{For MNIST Dataset}
         \label{fig:mnisthisto}
     \end{subfigure}
     \hfill
     \vspace{1mm}
     \begin{subfigure}[b]{0.5\textwidth}
         \centering
         \includegraphics[width=0.9\textwidth, height=0.45\textwidth]{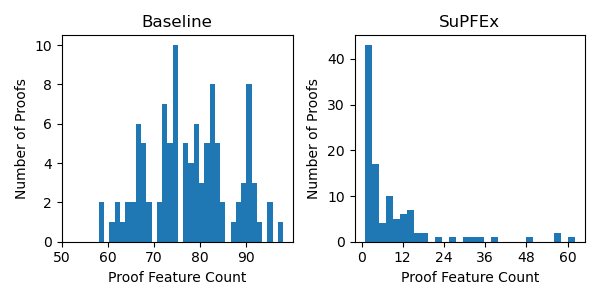}
         \vspace{-2mm}
         \caption{For CIFAR-10 dataset}
         \label{fig:cifarhisto}
     \end{subfigure}
        \caption{Distribution of the size of the  proof feature set computed by \algname Algorithm on COLT-trained networks.}
         \vspace{-2mm}
        \label{fig:histo}
\end{figure}
is better than the size of the proof feature set extracted by the trivial algorithm - one that only removes only ``zero"  features which include the proof features ($[l, u]$) where both $l = u = 0$. (Definition~\ref{def:zeroproofFeatures})  
For each network, we use $500$ randomly picked images from their corresponding test sets. The $\epsilon$ used for MNIST networks is $0.02$ and that for CIFAR-10 networks is $0.2 / 255$. 
We note that although the robustly trained networks can be verified robust for higher values of $\epsilon$, it is not possible to verify standard networks with such high values.
To achieve common ground, we use small $\epsilon$ values for experiments involving standard networks and conduct separate experiments on only robustly trained networks with higher values of $\epsilon$ ($0.1$ for MNIST, $2/255$ for CIFAR-10 networks).
As shown in Table~\ref{table:compareoriginal} we do not observe any significant change in the performance of \algname w.r.t different $\epsilon$-values. 
\\
In Table~\ref{table:compareoriginal}, we show 
the value of $\epsilon$ used to define the region $\inreg$ in column 3, and the total number of properties proved out of 500 in column 4.
The size of the original proof feature size corresponding to each network is shown in column 5, the mean and median of the proof feature set size computed using Theorem~\ref{thm:efficacyBound} in columns 6 and 7 respectively, and the mean and median of the proof feature set size computed using \algname in columns 8 and 9 respectively.
We note that feature sets obtained by \algname are significantly smaller than the upper bound provided by Theorem~\ref{thm:efficacyBound}. For example, in the case of the PGD trained MNIST network with $1000$ neurons in the penultimate layer, the average size computed from Theorem~\ref{thm:efficacyBound} is $218.02$, while that obtained using \algname is only $5.57$. 
In the last two columns of Table~\ref{table:compareoriginal}, we summarise the percentage of cases where we are able to achieve a proof feature set of size less than or equal to $5$ and $10$ respectively. Figures~\ref{fig:mnisthisto} and~\ref{fig:cifarhisto} display a histogram where the x-axis is the size of the extracted proof feature set using \algname and y-axis is the number of local robustness properties for COLT-trained DNNs. Histograms for other DNNs are presented in Appendix~\ref{appendix:results}. These histograms are skewed towards the left which means that for most of the local properties, we are able to generate a small set of proof features using \algname.
\begin{figure}[h!]
    \centering
    \begin{subfigure}[b]{0.5\textwidth}
         \centering
         \includegraphics[width=0.8\textwidth, height=0.5\textwidth]{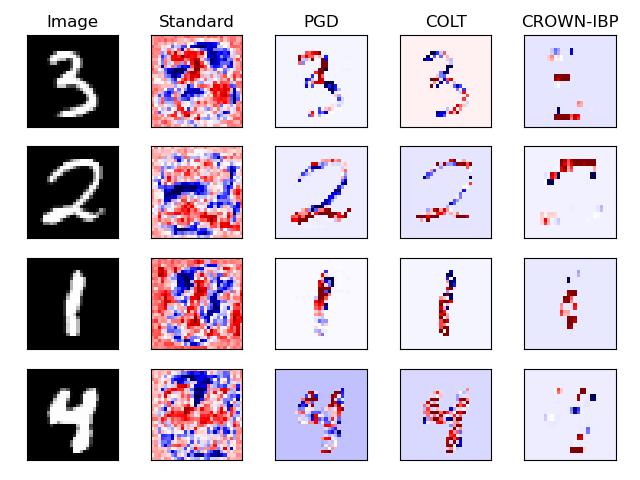}
         \vspace{-2mm}
         \caption{Gradient maps generated on MNIST networks.}
         \label{fig:mnist_grad_map}
     \end{subfigure}
     \hfill
     \vspace{0.5mm}
     \begin{subfigure}[b]{0.5\textwidth}
         \centering
         \includegraphics[width=0.8\textwidth, height=0.5\textwidth]{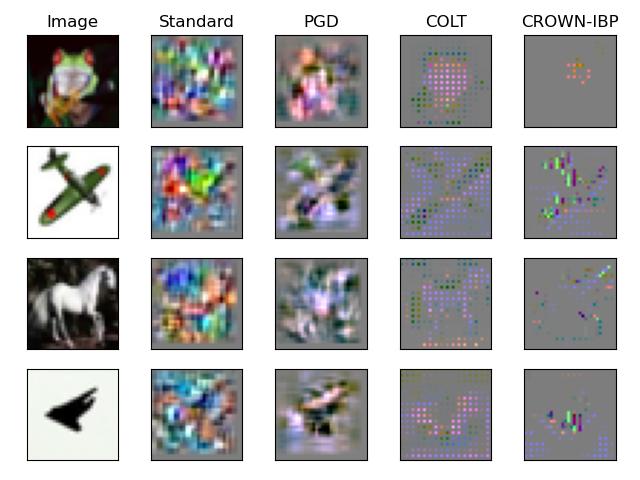}
         \vspace{-2mm}
         \caption{Gradient maps generated on CIFAR-10 networks.}
         \label{fig:cifar_grad_map}
     \end{subfigure}
\caption{
The top proof feature corresponding to DNNs trained using different methods rely on different input features.
}
         \vspace{-2mm}
\label{fig:grad_maps}
\end{figure}
\subsection{Qualititive comparison of robustness proofs}
\label{sec:qualitativeAnalysis}
It has been observed in \cite{tsipras2018robustness} that the standardly trained networks rely on some of the spurious features in the input in order to gain a higher accuracy and as a result, are not very robust against adversarial attacks. On the other hand, the empirically robustly trained networks rely more on human-understandable features and are, therefore, more robust against attacks. This empirical robustness comes at cost of reduced accuracy. 
So, there is an inherent dissimilarity between the types of input features that the standard and adversarially trained networks rely on while classifying a single input. Also, certified robust trained networks are even more robust than the empirically trained ones, however, they report even less accuracy~\cite{MLSYS2021_ca46c1b9}. 
In this section, we interpret proof features obtained with \algname and use these interpretations to qualitatively check whether the dissimilarities are also evident in the invariants captured by the different proofs of the same robustness property on standard and robustly trained networks.
We also study the effect of certified robust training methods like CROWN-IBP \cite{ZhangCXGSLBH20}, empirically robust training methods like PGD \cite{madry:17} and training methods that combine both  adversarial and certified training like COLT \cite{BalunovicV:20} on the proof features. \\
For a local input region $\inreg$, we say that a robustness proof is semantically meaningful if it focuses on the relevant features of the output class for images contained inside $\inreg$ and not on the spurious features. In the case of MNIST or CIFAR-10 images, spurious features are the pixels that form a part of the background of the image, whereas important features are the pixels that are a part of the actual object being identified by the network.
Gradient map of the extracted proof features w.r.t. to the input region $\inreg$ gives us an idea of the input pixels that the network focuses on. We obtain the gradient maps by calculating the mean gradient over 100 uniformly drawn samples from $\inreg$ as described in Section~\ref{sec:featuresInvestigation}.
As done in \cite{tsipras2018robustness}, to avoid introducing any inherent bias in proof feature visualization, no preprocessing (other than scaling and clipping for visualization) is applied to the gradients obtained for each individual sample.
\\
In Fig.~\ref{fig:grad_maps}, we compare the gradient maps corresponding to the top proof feature (the one having the highest priority $P_{ub}(\absFeature_{n_i})$) on networks from Table~\ref{table:compareoriginal} on representative images of different output classes in the MNIST and CIFAR10 test sets. 
The experiments leads us to interesting observations - even if some property is verified for both the standard network and the robustly trained network, there is a difference in the human interpretability of the types of input features that the proofs rely on.
The standard networks and the provably robust trained networks like CROWN-IBP are the two extremes of the spectrum. 
For the networks obtained with standard training, we observe that although the top-proof feature does depend on some of the semantically meaningful regions of the input image, the gradient at several spurious features is also non-zero.
On the other hand, the top proof feature corresponding to state-of-the-art provably robust training method CROWN-IBP filters out most of the spurious features, but it also misses out on some meaningful features.
The proofs of PGD-trained networks filter out the spurious features and are, therefore, more semantically aligned than the standard networks. 
The proofs of the training methods that combine both empirical robustness and provable robustness like COLT in a way provide the best of both worlds by not only selectively filtering out the spurious features but also highlighting the more human interpretable features, unlike the certifiably trained networks. 
So, as the training methods tend to regularize more for robustness, their proofs become more conservative in relying on the input features. 
To further support our observation, we show additional plots for the top proof feature visualization in Appendix~\ref{sec:addiproofFeatureVis} and visualization for multiple proof features in Appendix~\ref{sec:addiMultiFeatureVis}.
We also conduct experiments for different values of $\epsilon$ used for defining $\inreg$. 
The extracted proof features set w.r.t high $\epsilon$ values ($\epsilon = 0.1$ for MNIST and $\epsilon = 2/255$ for CIFAR-10) are similar to those generated with smaller $\epsilon$. 
The gradient maps corresponding to the top feature for higher $\epsilon$ values are also similar as shown in Appendix~\ref{sec:addiHighEpsPlots}.
For COLT-trained MNIST networks, in
~\ref{sec:topBottomFeature} we compare the gradients of top proof features retained by \algname with the pruned proof features with low priority.
As expected, graidents of the pruned proof features with low priority contain spurious input features.
\\
\begin{figure}[t]
\centering
\includegraphics[height = 0.27\textwidth]{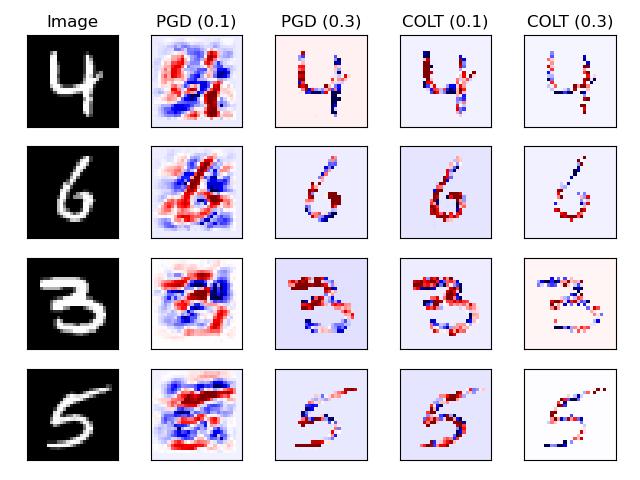}
\vspace{-2mm}
\caption{Visualization of gradients of the top proof feature for PGD and COLT networks trained
using different values of $\epsilon_{train} $.}
\vspace{-3mm}
\label{fig:grad_maps_with_changed_eps}
\end{figure}

\subsection{Sensitivity analysis on training parameters}
It is expected that DNNs trained with larger  $\epsilon_{train}$ values are more robust. So, we analyze the sensitivity of the extracted proof features to $\epsilon_{train}$.
We use the DNNs trained with PGD and COLT and $\epsilon_{train} \in \{0.1,0.3\}$ on the MNIST dataset. 
Fig~\ref{fig:grad_maps_with_changed_eps}  visualize proof features for the DNNs with
additional plots are available in Appendix~\ref{sec:addSensitivity}.
We observe that by increasing the value of $\epsilon_{train}$, the top proof feature filters out more input features. This is aligned with our observation in Section~\ref{sec:qualitativeAnalysis} that a more robustly trained neural networks are more conservative in using the input features.

\subsection{Comparing proofs of different verifiers}

The proof features extracted by \algname are specific to the proof generated by the verifier. 
In this experiment, we compare proof features generated by two popular verifiers IBP \cite{ZhangCXGSLBH20, IBP} and DeepZ on networks shown in Table~\ref{table:compareoriginal} for the same properties as before.  
Note that, although IBP is computationally efficient, it is less precise than DeepZ. 
For standard DNNs, most of the properties cannot be proved by IBP. Hence, 
in this experiment, we omit standard DNNs and also, consider only the properties that can be verified by both DeepZ and IBP. 
Table~\ref{tab:diff_ver} 
presents the \% cases where the top proof feature computed by both the verifiers is the same (column 2), the \% cases where the top 5 proof features computed by both the verifiers are the same and the \% cases where the complete proof feature sets computed by both the verifiers are same. We observe that for the MNIST dataset, in 100\% of the cases for PGD-trained and COLT-trained networks and in 99.79\% cases for the CROWN-IBP trained networks, the top feature computed by both the verifiers is the same. Detailed table is available in Appendix~\ref{sec:additionalVerRes}.

\vspace{-2mm}
\section{Conclusion}
In this work, we develop a novel method called \algname to interpret neural network robustness proofs. 
We empirically establish that even if a property holds for a DNN, the proof for the property may rely on spurious or semantically meaningful features depending on the training method used to train the DNNs. We believe that \algname can be applied for diagnosing the trustworthiness of DNNs inside their development pipeline.  

\begin{table}[b]
    \centering
    \vspace{-3mm}
    \captionsetup{justification=centering}
    \caption{Comparing proofs of IBP \& DeepZ}
    \resizebox{0.5\textwidth}{!}{
    \begin{tabular}{@{}l c c c c r r@{}}
    \toprule
    Training & \multicolumn{2}{c}{\% proofs with the} & \multicolumn{2}{c}{\% proofs with the} & \multicolumn{2}{r}{\% proofs with the} \\
    Method & \multicolumn{2}{c}{same top feature} & \multicolumn{2}{c}{same top-5 feature} & \multicolumn{2}{r}{same feature set} \\
    &MNIST & CIFAR10 &MNIST & CIFAR10 &MNIST & CIFAR10 \\
    \midrule
    PGD Trained & 100 \% & 100 \% & 92.0 \% & 98.31 \% & 92.0 \% & 96.87 \%\\
    COLT & 100 \% & 97.87 \% & 87.17 \% & 92.53 \% & 82.05 \% & 89.36 \% \\
    CROWN-IBP & 99.79 \% & 100 \% & 96.26 \% & 97.92 \% & 93.15 \% & 95.89 \%\\
    \midrule
    \end{tabular}}
    \label{tab:diff_ver}
\end{table}

\clearpage

\bibliography{main}
\bibliographystyle{icml2023}
\appendix
\onecolumn
\section{Mathematical proofs}
\label{appendix:proofs}
\textbf{Theorem}~\ref{thm:sufficiencyProof}: If the verifier $\ver$ can prove the property $(\inreg,\outprop)$ on the network $N$, then $\cannFeature^{(\alg)}$ computed by Algorithm \ref{alg:featureComputation} is sufficient (Definition \ref{def:sufficent}). 
\begin{proof}By induction on the number of steps of the while loop.\\ 
\textbf{Induction Hypothesis: }At each step of the loop, $\absFeature^{\mathbb{(A)}}_{S_0} \cup \absFeature^{\mathbb{(A)}}_{S}$ is sufficient.\\
\textbf{Base Case: }At step 0, i.e., at initialization, $\absFeature^{\mathbb{(A)}}_{S_0} = \{\}$ and $\absFeature^{\mathbb{(A)}}_{S}\absFeature$. So, $\absFeature^{\mathbb{(A)}}_{S_0} \cup \absFeature^{\mathbb{(A)}}_{S} = \absFeature$. Given that $\ver$ proves the property $(\inreg, \outprop)$ on $N$, from Definition~\ref{def:sufficent}, $\absFeature$ is sufficient.\\
\textbf{Induction Case: }Let $\absFeature^{\mathbb{(A)}}_{S_0} \cup \absFeature^{\mathbb{(A)}}_{S}$ be sufficient for $n$-th step of the loop. Consider the following cases for $(n+1)$-th step of the loop.
\begin{enumerate}
    \item Let $\absFeature^{\mathbb{(A)}}_{S_0} \cup \absFeature^{\mathbb{(A)}}_{S_1}$ be sufficient at line 12. In this case, $\absFeature^{\mathbb{{A}}}_{S}$ is updated by $\absFeature^{\mathbb{(A)}}_{S_1}$ (line 14). So, $\absFeature^{\mathbb{(A)}}_{S_0} \cup \absFeature^{\mathbb{(A)}}_{S}$ is sufficient.
    \item Let $\absFeature^{\mathbb{(A)}}_{S_0} \cup \absFeature^{\mathbb{(A)}}_{S_1}$ be not sufficient at line 12. In this case, $\absFeature^{\mathbb{(A)}}_{S_0} $ and $ \absFeature^{\mathbb{(A)}}_{S}$ are updated as in lines 16 and 17. Let the new $\absFeature^{\mathbb{(A)}}_{S_0} $ and $ \absFeature^{\mathbb{(A)}}_{S}$ be $\absFeature'^{\mathbb{(A)}}_{S_0} $ and $ \absFeature'^{\mathbb{(A)}}_{S}$. So, $\absFeature'^{\mathbb{(A)}}_{S_0} = \absFeature^{\mathbb{(A)}}_{S_0} \cup \absFeature^{\mathbb{(A)}}_{S_1}$ and $\absFeature'^{\mathbb{(A)}}_{S} = \absFeature^{\mathbb{(A)}}_{S_2}$. So, $\absFeature'^{\mathbb{(A)}}_{S_0} \cup \absFeature'^{\mathbb{(A)}}_{S} = \absFeature^{\mathbb{(A)}}_{S_0} \cup \absFeature^{\mathbb{(A)}}_{S_1} \cup \absFeature^{\mathbb{(A)}}_{S_2}$. Also, $\absFeature^{\mathbb{(A)}}_{S_1} \cup \absFeature^{\mathbb{(A)}}_{S_2} = \absFeature^{\mathbb{(A)}}_{S}$. So, $\absFeature'^{\mathbb{(A)}}_{S_0} \cup \absFeature'^{\mathbb{(A)}}_{S} = \absFeature^{\mathbb{(A)}}_{S_0} \cup \absFeature^{\mathbb{(A)}}_{S}$. So, from induction hypothesis, $\absFeature'^{\mathbb{(A)}}_{S_0} \cup \absFeature'^{\mathbb{(A)}}_{S}$ is sufficient.
\end{enumerate}
\end{proof}
\begin{lemma}
$\forall \suffFeature \subseteq \absFeature$, $\dlower(\suffFeature) \leq \sum\limits_{\absFeature_{n_i} \in \absFeature \setminus \suffFeature} P_{ub}(\absFeature_{n_i})$ where $P_{ub}(\absFeature_{n_i})$ is defined in ~\eqref{eq:apxpriority}.
\label{lem:priorbound}
\end{lemma}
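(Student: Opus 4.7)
The plan is to reduce the claim to the per-feature inequality $\contri(\absFeature_{n_i}, \suffFeature') \leq P_{ub}(\absFeature_{n_i})$ that was already derived in the body, via a telescoping argument that prunes the features in $\absFeature \setminus \suffFeature$ one at a time. First, I would make explicit what $\dlower(\suffFeature)$ denotes: based on its role in Theorem~\ref{thm:efficacyBound}, where it measures how much of the original margin $\lb(\abstraction)$ can be absorbed by pruning without violating the property, I read it as the magnitude of change in the verifier's lower bound when the final-layer weights are restricted from $\absFeature$ to $\suffFeature$, i.e.\ $\dlower(\suffFeature) = \bigl|\lb(\abstraction(W_l, [d_{l-1}])) - \lb(\abstraction(W_l, S))\bigr|$.

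Next, I would fix an arbitrary enumeration $\absFeature \setminus \suffFeature = \{\absFeature_{n_{i_1}}, \ldots, \absFeature_{n_{i_k}}\}$ and construct a telescoping chain $T_0 = \absFeature$ and $T_j = T_{j-1} \setminus \{\absFeature_{n_{i_j}}\}$, so that $T_k = \suffFeature$. By the triangle inequality applied to the telescoping sum,
\[
\dlower(\suffFeature) \;\leq\; \sum_{j=1}^{k} \bigl|\lb(\abstraction(W_l, T_{j-1})) - \lb(\abstraction(W_l, T_j))\bigr| \;=\; \sum_{j=1}^{k} \contri(\absFeature_{n_{i_j}}, T_{j-1}).
\]
Bounding each summand by $P_{ub}(\absFeature_{n_{i_j}})$ using the derivation already in the body then yields $\dlower(\suffFeature) \leq \sum_{\absFeature_{n_i} \in \absFeature \setminus \suffFeature} P_{ub}(\absFeature_{n_i})$, which is exactly the claim.

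The main obstacle I anticipate is the last bounding step, because the body stated the $P_{ub}$ bound for sets $\suffFeature' \in \fset{\absFeature_{n_i}}$ (i.e.\ sufficient sets containing $\absFeature_{n_i}$), whereas the intermediate sets $T_{j-1}$ that appear in the telescoping chain need not be sufficient. However, the derivation of $P_{ub}$ in the body only used two facts about $\suffFeature'$: that pruning the $i$-th column of $W_l(S')$ changes $C^T W_l(S') X$ by exactly $(C^T W_l[:,i]) \cdot x_i$, and that $x_i \in [l_{n_i}, u_{n_i}]$ for every $X \in \abstraction_{l-1}$. Neither fact depends on $\suffFeature'$ being sufficient, so $\contri(\absFeature_{n_i}, \suffFeature') \leq P_{ub}(\absFeature_{n_i})$ holds for any subset $\suffFeature' \subseteq \absFeature$ containing $\absFeature_{n_i}$. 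With this mild extension of the earlier derivation in hand, the three steps combine into the desired inequality.
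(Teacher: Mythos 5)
Your proof is correct, but it takes a genuinely different route from the paper's. The paper proves the lemma in one shot: it writes $\dlower(\suffFeature) = |\lb(\abstraction) - \lb(\abstraction(W_{l}, S))|$ directly as $\max_{X \in \abstraction_{l-1}} \bigl|\sum_{\absFeature_{n_i} \in \absFeature \setminus \suffFeature} C^T W_{l}[:,i]\, x_i\bigr|$ (the joint contribution of all removed columns at once), applies the triangle inequality inside that single maximum, exchanges the max with the sum, and recognizes each per-feature maximum as $P_{ub}(\absFeature_{n_i})$ via~\eqref{eq:apxpriority}; the quantity $\contri$ and the notion of sufficiency never appear. You instead telescope through the chain $T_0 = \absFeature \supset T_1 \supset \cdots \supset T_k = \suffFeature$, bound $\dlower(\suffFeature)$ by $\sum_j \contri(\absFeature_{n_{i_j}}, T_{j-1})$, and reuse the body's per-feature bound $\contri \leq P_{ub}$. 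The obstacle you anticipate is real and you resolve it correctly: the body established the bound only for $\suffFeature' \in \fset{\absFeature_{n_i}}$, and your intermediate sets $T_{j-1}$ need not be sufficient, but as you observe the derivation uses only that pruning column $i$ removes the term $(C^T W_{l}[:,i])\,x_i$ with $x_i \in [l_{n_i}, u_{n_i}]$ for all $X \in \abstraction_{l-1}$, so sufficiency is dispensable. What each approach buys: the paper's direct expansion is shorter and sidesteps the sufficiency question entirely, since it never routes through $\contri$; your telescoping is more modular (it would survive any alternative derivation of the per-feature bound) and makes explicit a generalization of $\contri \leq P_{ub}$ that the paper leaves unstated. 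One shared implicit step is worth flagging: both arguments rely on $|\min_X f(X) - \min_X g(X)| \leq \max_X |f(X) - g(X)|$ over $\abstraction_{l-1}$, together with the verifier being exact on the final affine layer; the paper writes this as an equality in its second line where it is really an inequality, and your version localizes the same fact inside each $\contri$ bound, so neither argument is weaker on this point. Your inferred reading of $\dlower(\suffFeature)$, which the paper never formally defines before the lemma, matches the first line of the paper's proof exactly.
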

\begin{proof}
\begin{align*}
    \dlower(\suffFeature) &= |\lb(\abstraction) - \lb(\abstraction(W_{l}, S))| \\
    &= \max_{X \in \abstraction_{l-1}} |\sum\limits_{\absFeature_{n_i} \in \absFeature \setminus \suffFeature} C^T W[:i]X| \\
    &\leq \max_{X \in \abstraction_{l-1}} \sum\limits_{\absFeature_{n_i} \in \absFeature \setminus \suffFeature} |C^T W[:i]X| \\
    &\leq \sum\limits_{\absFeature_{n_i} \in \absFeature \setminus \suffFeature} \max_{X \in \abstraction_{l-1}} |C^T W[:i]X| \\
    &= \sum\limits_{\absFeature_{n_i} \in \absFeature \setminus \suffFeature} P_{ub}(\absFeature_{n_i}) \quad [\text{ From } ~\eqref{eq:apxpriority}]
\end{align*}
\end{proof}
\begin{lemma}
A feature set $\suffFeature \subseteq \absFeature$ with $\dlower(\suffFeature) \leq \lb(\abstraction)$ is sufficient provided $\lb(\abstraction) \geq 0$.
    \label{lem:suffbound}
\end{lemma}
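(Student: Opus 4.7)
The plan is to unfold the definition of $\dlower(\suffFeature)$ used in the proof of Lemma~\ref{lem:priorbound} and then do the one-line algebra needed to drop the absolute value. Recall that $\dlower(\suffFeature) = |\lb(\abstraction) - \lb(\abstraction(W_{l}, S))|$ and that, by the setup in Section~\ref{sec:prelims}, $\suffFeature$ is sufficient iff $\lb(\abstraction(W_{l}, S)) \geq 0$.

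First I would write down the hypothesis as
\begin{equation*}
|\lb(\abstraction) - \lb(\abstraction(W_{l}, S))| \;\leq\; \lb(\abstraction),
\end{equation*}
and then expand the absolute value into its two-sided form
\begin{equation*}
-\lb(\abstraction) \;\leq\; \lb(\abstraction) - \lb(\abstraction(W_{l}, S)) \;\leq\; \lb(\abstraction).
\end{equation*}
The right half of this sandwich, after cancelling $\lb(\abstraction)$ from both sides and multiplying by $-1$, immediately yields $\lb(\abstraction(W_{l}, S)) \geq 0$. Invoking the characterisation of sufficiency (the pruned network's verifier-computed lower bound is non-negative), we conclude that $\suffFeature$ is sufficient.

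There is no genuine obstacle here; the only subtlety is that the claim requires the assumption $\lb(\abstraction) \geq 0$ in order for the sandwich inequality to actually give a useful non-negative bound (otherwise the hypothesis $\dlower(\suffFeature) \leq \lb(\abstraction)$ would either be vacuous or force equality of the two lower bounds in a trivial way). I would make this role of the assumption explicit in the one-sentence write-up so that the reader sees why both hypotheses of the lemma are used. This lemma, together with Lemma~\ref{lem:priorbound}, then sets up the chain of inequalities $\dlower(\suffFeature) \leq \sum_{\absFeature_{n_i} \in \absFeature \setminus \suffFeature} P_{ub}(\absFeature_{n_i}) \leq \lb(\abstraction)$ that is needed to derive the efficacy bound of Theorem~\ref{thm:efficacyBound}.
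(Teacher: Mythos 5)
Your proof is correct and takes essentially the same route as the paper: the paper unfolds $\dlower(\suffFeature) = |\lb(\abstraction) - \lb(\abstraction(W_{l}, S))|$ and does a two-case split on the sign ($\lb(\abstraction(W_{l}, S)) = \lb(\abstraction) \pm \dlower(\suffFeature)$), which is just your sandwich inequality packaged case-by-case. One small correction to your commentary: in your one-sided derivation the assumption $\lb(\abstraction) \geq 0$ is never actually invoked --- it is already implied by $0 \leq \dlower(\suffFeature) \leq \lb(\abstraction)$ --- so your planned sentence explaining ``why both hypotheses are used'' would be misleading as stated; the paper, by contrast, does use $\lb(\abstraction) \geq 0$ explicitly in its case $\lb(\abstraction(W_{l}, S)) = \lb(\abstraction) + \dlower(\suffFeature)$.
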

\begin{proof}
$\dlower(\suffFeature) = |\lb(\abstraction) - \lb(\abstraction(W_{l}, S))|$. So, there can be two cases:
\begin{enumerate}
    \item $\lb(\abstraction(W_{l}, S)) = \lb(\abstraction) + \dlower(\suffFeature)$. Since, $\lb(\abstraction) \geq 0$ and $\dlower(\suffFeature) \geq 0$, $\lb(\abstraction(W_{l}, S)) \geq 0$. So, $\suffFeature$ is sufficient.
    \item $\lb(\abstraction(W_{l}, S)) = \lb(\abstraction) - \dlower(\suffFeature)$\\
    $\lb(\abstraction) \geq 0$ and $\dlower(\suffFeature) \leq \lb(\abstraction)$. \\
    So, $\lb(\abstraction(W_{l}, S)) \geq 0$. So, $\suffFeature$ is sufficient.
\end{enumerate}
\end{proof}
\begin{lemma}
Let, $P_{max}$ denote the maximum of all priorities $P_{ub}(\absFeature_{n_i})$ over $\absFeature$. \\
Let $\suffFeature \subseteq \absFeature$. If $|\suffFeature| \leq \lfloor \frac{\lb(\abstraction)}{P_{max}} \rfloor$, then proof feature set $\suffFeature^{c} = \absFeature \setminus \suffFeature$ is sufficient provided $\lb(\abstraction) \geq 0$.
    \label{lem:minNonzero}
\end{lemma}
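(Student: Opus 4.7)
The plan is to chain the two preceding lemmas together, using $P_{max}$ as the tool that converts a bound on $|\suffFeature|$ into a bound on $\dlower(\suffFeature^{c})$. Concretely, I will instantiate Lemma~\ref{lem:priorbound} with the feature subset $\suffFeature^{c} = \absFeature \setminus \suffFeature$, which yields
\begin{equation*}
\dlower(\suffFeature^{c}) \;\leq\; \sum_{\absFeature_{n_i} \in \absFeature \setminus \suffFeature^{c}} P_{ub}(\absFeature_{n_i}) \;=\; \sum_{\absFeature_{n_i} \in \suffFeature} P_{ub}(\absFeature_{n_i}).
\end{equation*}
This rewrites the quantity of interest as a sum of priorities indexed over the pruned set $\suffFeature$.

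Next I would upper-bound each term in the sum by $P_{max}$, giving $\sum_{\absFeature_{n_i} \in \suffFeature} P_{ub}(\absFeature_{n_i}) \leq |\suffFeature| \cdot P_{max}$. Applying the hypothesis $|\suffFeature| \leq \lfloor \lb(\abstraction)/P_{max} \rfloor$ and noting $\lfloor \lb(\abstraction)/P_{max} \rfloor \cdot P_{max} \leq \lb(\abstraction)$, I obtain $\dlower(\suffFeature^{c}) \leq \lb(\abstraction)$. Finally, since $\lb(\abstraction) \geq 0$ by assumption, Lemma~\ref{lem:suffbound} applies directly to $\suffFeature^{c}$ and gives that $\suffFeature^{c}$ is sufficient, which is the desired conclusion.

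There is no real obstacle here beyond bookkeeping: the lemma is essentially a corollary of the two prior lemmas, with the only subtlety being the direction of the pruning (we prune $\suffFeature$ and keep $\suffFeature^{c}$, so the indexing in Lemma~\ref{lem:priorbound} must be applied carefully with $\suffFeature^{c}$ playing the role of the retained set). I would make sure to write the chain of inequalities in a single display to make the role of the floor operation transparent, and to explicitly invoke $P_{ub}(\absFeature_{n_i}) \leq P_{max}$ as the definition of $P_{max}$ before summing.
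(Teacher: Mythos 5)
Your proposal is correct and follows exactly the paper's own argument: instantiate Lemma~\ref{lem:priorbound} at $\suffFeature^{c}$ so the sum runs over $\suffFeature$, bound each priority by $P_{max}$ to get $\dlower(\suffFeature^{c}) \leq |\suffFeature|\cdot P_{max} \leq \lb(\abstraction)$, then conclude via Lemma~\ref{lem:suffbound}. The only difference is presentational — you make the intermediate sum $\sum_{\absFeature_{n_i} \in \suffFeature} P_{ub}(\absFeature_{n_i})$ and the floor manipulation explicit, which the paper compresses into a single step.
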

\begin{proof}
\begin{align*}
    &\;\forall \absFeature_{n_i} \in \absFeature, P_{ub}(\absFeature_{n_i}) \leq P_{max} \\
    \text{From Lemma~\ref{lem:priorbound}, }&\; \dlower(\suffFeature^c) \leq |\suffFeature|\times P_{max} \\
    \text{Also, }&\; |\suffFeature| \leq \lfloor \frac{\lb(\abstraction)}{P_{max}} \rfloor \\
    \text{So, }&\;\dlower(\suffFeature^c) \leq \lb(\abstraction) \\
    \text{From Lemma~\ref{lem:suffbound}, }&\; \suffFeature^c \text{ is sufficient.}
\end{align*}
\end{proof}
\textbf{Theorem}~\ref{thm:efficacyBound}:
Given any network $N$ is verified on $(\inreg, \outprop)$ with verifier $\ver$ then $|\cannFeature^{(\alg)}|\leq d_{l-1} - |Z(\absFeature)| - \lfloor \frac{\lb(\abstraction)}{P_{max}} \rfloor $
\begin{proof}
    The algorithm~\ref{alg:featureComputation} arranges the elements of the proof feature set $\absFeature$ in decreasing order according to the priority defined by $P_{ub}$.\\
    Let $\absFeature'$ be the ordered set corresponding to $\absFeature$. So, $\absFeature' = \absFeature_{n_1} :: \cdots :: \absFeature_{n_m}$, where $::$ is the list concatenation.\\
    The elements of $Z(\absFeature)$ will be at the end of this ordering. So, $\absFeature'$ can be written as $\absFeature'' :: Z(\absFeature)$ where $Z(\absFeature) = \absFeature_{n_{k+1}} :: \cdots :: \absFeature_{n_m}$ and $\absFeature'' = \absFeature_{n_1} :: \cdots :: \absFeature_{n_k}$ and $p$ be some of the last elements of $\absFeature''$ s.t. the sum of their priorities just less than $\lfloor \frac{\lb(\abstraction)}{P_{max}} \rfloor$, i.e., \\
    \begin{align*}
        p = \absFeature_{n_j} :: \cdots :: \absFeature_{n_k} \\
        \sum\limits_{i=j}^{k}P_{ub}(\absFeature_{n_i}) \leq \lfloor \frac{\lb(\abstraction)}{P_{max}} \rfloor \\
        \sum\limits_{i=j-1}^{k}P_{ub}(\absFeature_{n_i}) \geq \lfloor \frac{\lb(\abstraction)}{P_{max}} \rfloor
    \end{align*}
    Further, let $p' = p::Z(\absFeature)$, i.e., $p' = \absFeature_{n_j} :: \cdots :: \absFeature_{n_m}$. Since $P_{ub}$ is 0 for all elements of $Z(\absFeature)$, 
    \begin{align}
        \sum\limits_{i=j}^{m}P_{ub}(\absFeature_{n_i}) \leq \lfloor \frac{\lb(\abstraction)}{P_{max}} \rfloor
    \end{align}
    Also, $|p'| = |Z(\absFeature)| + \lfloor \frac{\lb(\abstraction)}{P_{max}} \rfloor$
    Now, we prove by induction on the number of steps of the while loop in the algorithm~\ref{alg:featureComputation} that the set $\cannFeature^{(\alg)}$ never contains any elements from $p'$. \\
    \textbf{Induction Hypothesis: }$\cannFeature^{(\alg)} \cap p' = \{\}$ \\
    \textbf{Base Case: }At initialization, $\cannFeature^{(\alg)} = \{\}$. So, the induction hypothesis holds trivially. \\
    \textbf{Induction Step: }Let the induction hypothesis be true for the $n$-th step of the algorithm~\ref{alg:featureComputation}. For the $(n+1)$-th step, let the new $\absFeature^{\mathbb{(A)}}_{S_0}$ and $\absFeature^{\mathbb{(A)}}_{S}$ be $\absFeature'^{\mathbb{(A)}}_{S_0}$ and $\absFeature'^{\mathbb{(A)}}_{S}$ respectively. Consider the following two cases:
    \begin{enumerate}
        \item Let $\absFeature^{\mathbb{(A)}}_{S_0} \cup \absFeature^{\mathbb{(A)}}_{S_1}$ be sufficient at line 12. In this case, $\absFeature'^{\mathbb{{A}}_{S_0}} = \absFeature^{\mathbb{{A}}_{S_0}}$. So, the induction hypothesis holds.
        \item Let $\absFeature^{\mathbb{(A)}}_{S_0} \cup \absFeature^{\mathbb{(A)}}_{S_1}$ be not sufficient at line 12. \\
        \textbf{Claim: }$\absFeature^{\mathbb{(A)}}_{S_0} \cap p' = \{\}$ \\
        Let the above claim be false. \\
        $\implies \absFeature^{\mathbb{(A)}}_{S_0} \cap p' \not = \{\}$ \\
        $\implies \absFeature \setminus (\absFeature^{\mathbb{(A)}}_{S_0} \cup \absFeature^{\mathbb{(A)}}_{S_1}) \subset p'$ \\
        $\implies \sum\limits_{\absFeature{n_i} \in \absFeature \setminus (\absFeature^{\mathbb{(A)}}_{S_0} \cup \absFeature^{\mathbb{(A)}}_{S_1})} P_{ub} < \lfloor \frac{\lb(\abstraction)}{P_{max}} \rfloor \quad $[From (6)] \\
        $\implies (\absFeature^{\mathbb{(A)}}_{S_0} \cup \absFeature^{\mathbb{(A)}}_{S_1})$ is sufficient. (From Lemma~\ref{lem:minNonzero}) \\
        $\implies$ Contradiction.\\
        So, $\absFeature^{\mathbb{(A)}}_{S_1} \cap p' = \{\}$. In this step, $\absFeature'^{\mathbb{(A)}}_{S_0} = \absFeature^{\mathbb{(A)}}_{S_0} \cup \absFeature^{\mathbb{(A)}}_{S_1}$. Also, from induction hypothesis, $\absFeature^{\mathbb{(A)}}_{S_0} \cap p' = \{\}$.
        Therefore, the induction hypothesis holds, i.e., $\absFeature'^{\mathbb{(A)}}_{S_0} \cap p' = \{\}$.
    \end{enumerate}
\end{proof}
\section{Additional Experiments}
\label{appendix:results}
\subsection{Plots for distributions of the size of proof feature set}
\begin{table}[htbp]
    \centering
    \captionsetup{justification=centering}
    \caption{Distribution of the size of proof feature set for Standardly trained networks on MNIST dataset}
    \resizebox{0.45\textwidth}{!}{
    \begin{tabular}{l | c}
    \toprule
    \Large $\pmb{\epsilon}$ & MNIST \\
    \midrule
    \Large \textbf{0.02} & 
    \includegraphics[]{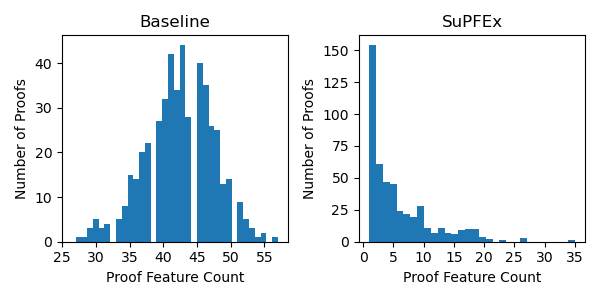}
    \label{fig:mnist-standard-histo1}
    \\
    \midrule
   \Large \textbf{0.1} &
    \includegraphics[]{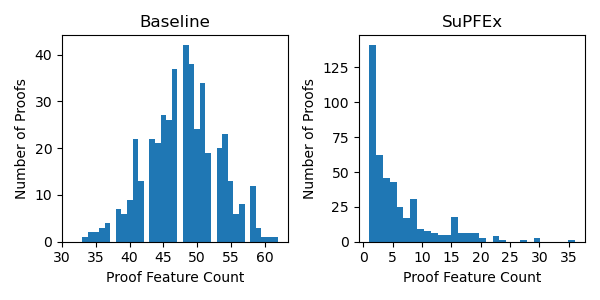}
    \label{fig:mnist-standard-histo2}
    \\
    \midrule
    \end{tabular}}
    \label{tab:standard_mnist}
\end{table}

\begin{table}[htbp]
    \centering
    \captionsetup{justification=centering}
    \caption{Distribution of the size of proof feature set for Standardly trained networks on CIFAR-10 dataset}
    \resizebox{0.45\textwidth}{!}{
    \begin{tabular}{l | c}
    \toprule
    \Large $\pmb{\epsilon}$ & CIFAR-10 \\
    \midrule
    \Large $\mathbf{\frac{0.2}{255}}$ & 
    \includegraphics[]{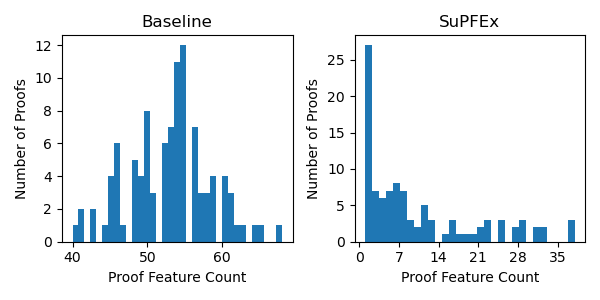}
    \label{fig:cifar-standard-histo1}
    \\
    \midrule 
    \Large $\mathbf{\frac{0.2}{255}}$ &
    \includegraphics[]{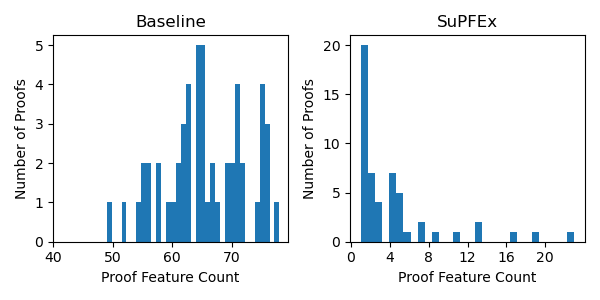}
    \label{fig:cifar-standard-histo2}
    \\
    \midrule
    \end{tabular}}
    \label{tab:standard_cifar10}
\end{table}

\begin{table*}[htbp]
    \centering
    \captionsetup{justification=centering}
    \caption{Distribution of the size of proof feature set for MNIST dataset}
    \resizebox{0.95\textwidth}{!}{
    \begin{tabular}{l  c  c}
    \toprule
    \Large \textbf{Training Method} & \Large \textbf{$\mathbf{\pmb{\epsilon}=0.02}$} & \Large \textbf{$\mathbf{\pmb{\epsilon}=0.1}$} \\
    \midrule
    \Large \textbf{PGD} & 
    \includegraphics[]{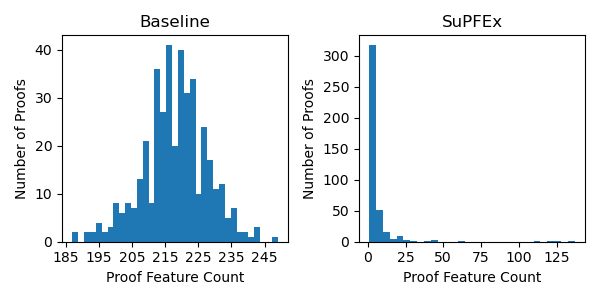}
    \label{fig:mnist-pgd-histo1}
    & 
    \includegraphics[]{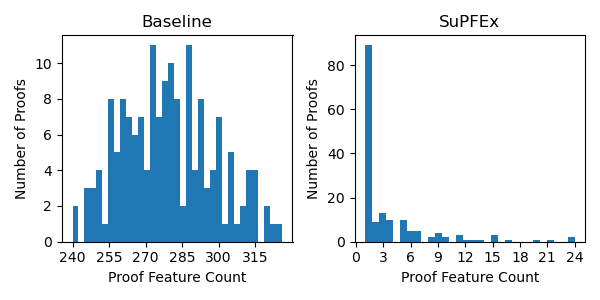}
    \label{fig:mnist-pgd-histo2}
    \\
    \midrule
    \Large \textbf{COLT} & 
    \includegraphics[]{histPlots/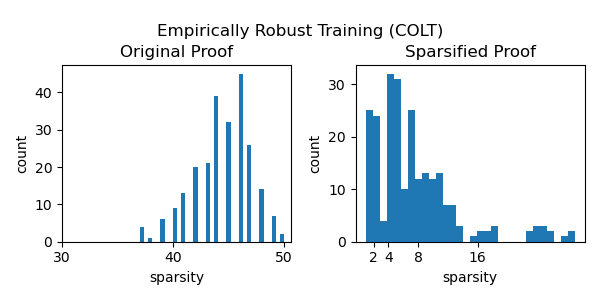}
    \label{fig:mnist-colt-histo1}
    & 
    \includegraphics[]{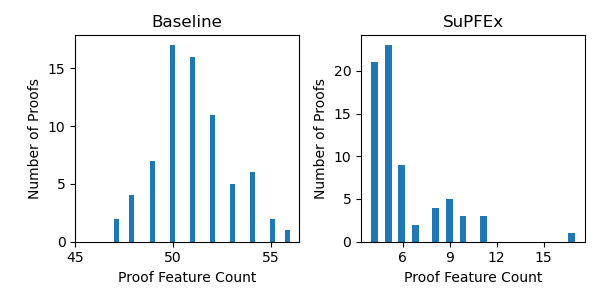}
    \label{fig:mnist-colt-histo2}
    \\
    \midrule
    \Large \textbf{CROWN-IBP} & 
    \includegraphics[]{histPlots/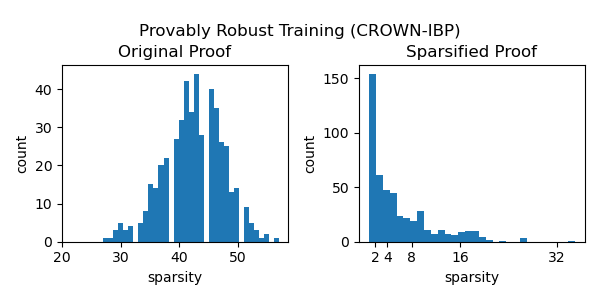}
    \label{fig:mnist-crown-histo1}
    & 
    \includegraphics[]{histPlots/mnist_cnn_2layer_width_1_best.pth_Domain.DEEPZ_0.1.png}
    \label{fig:mnist-crown-histo2}
    \\
    \midrule
    \end{tabular}}
\end{table*}

\begin{table*}[htbp]
    \centering
    \captionsetup{justification=centering}
    \caption{Distribution of the size of proof feature set for CIFAR-10 dataset}
    \resizebox{0.95\textwidth}{!}{
    \begin{tabular}{l c c}
    \toprule
    \Large \textbf{Training Method} & \Large \textbf{$\mathbf{\pmb{\epsilon}=0.2/255}$} & \Large \textbf{$\mathbf{\pmb{\epsilon}=2/255}$} \\
    \midrule
    \Large \textbf{PGD} & 
    \includegraphics[]{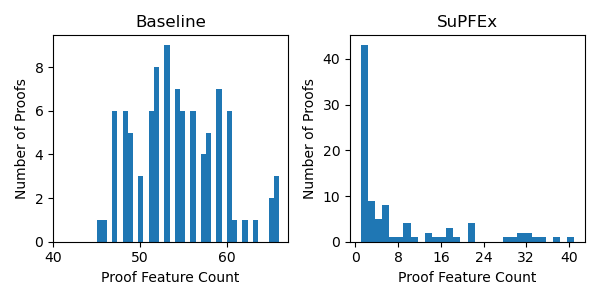}
    \label{fig:cifar-pgd-histo}
    & 
    \includegraphics[]{histPlots/convSmall_pgd_cifar.onnx_Domain.DEEPZ_0.00784313725490196.png}
    \label{fig:cifar-pgd-histo2}
    \\
    \midrule
    \Large \textbf{COLT} & 
    \includegraphics[]{hist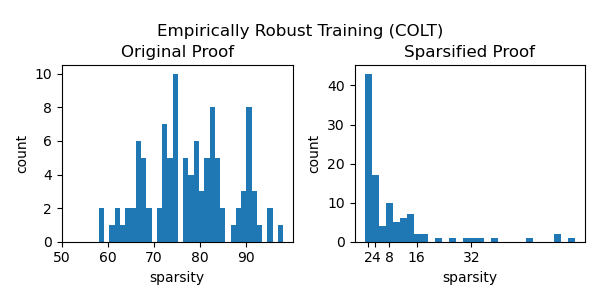}
    \label{fig:cifar-colt-histo1}
    & 
    \includegraphics[]{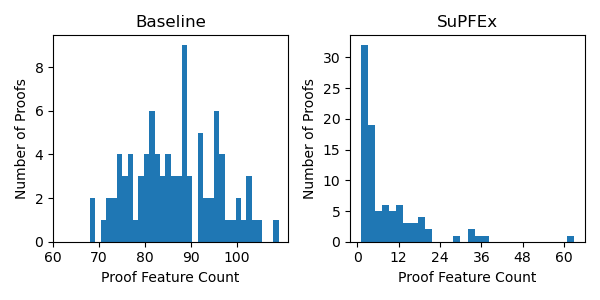}
    \label{fig:cifar-colt-histo2}
    \\
    \midrule
    \Large \textbf{CROWN-IBP} & 
    \includegraphics[]{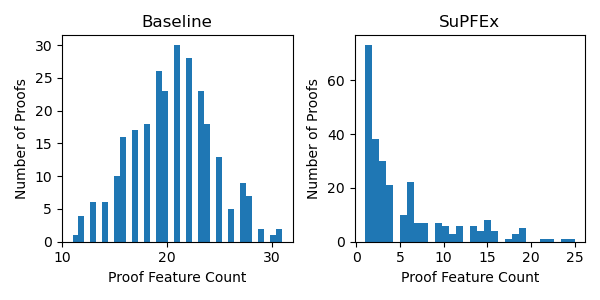}
    \label{fig:cifar-crown-histo1}
    & 
    \includegraphics[]{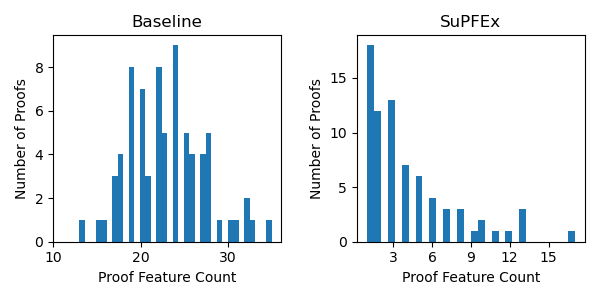}
    \label{fig:cifar-crown-histo2}
    \\
    \midrule
    \end{tabular}}
\end{table*}


\newpage
\subsection{Additional plots for the top proof feature visualization}
\label{sec:addiproofFeatureVis}
\begin{figure}[h!]
    \centering
    \begin{subfigure}[b]{0.5\textwidth}
         \centering
         \includegraphics[width=\textwidth]{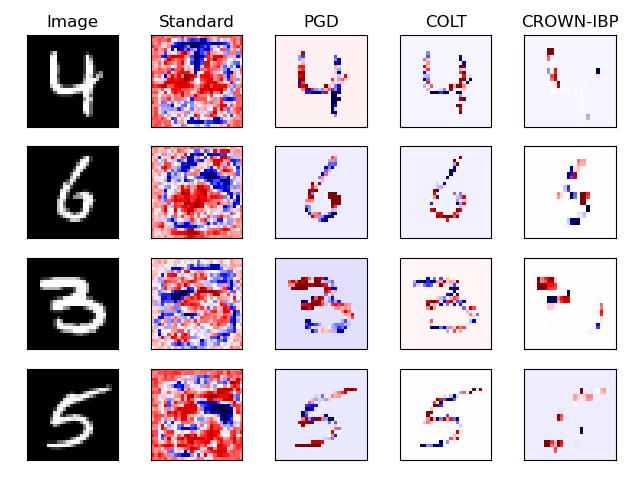}
         \caption{Gradient maps generated on MNIST networks}
         \label{fig:additional_mnist_plots}
     \end{subfigure}
     \hfill
     \begin{subfigure}[b]{0.5\textwidth}
         \centering
         \includegraphics[width=\textwidth]{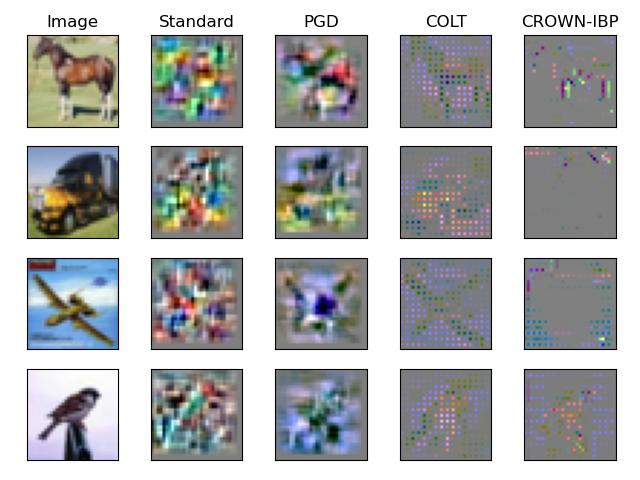}
         \caption{Gradient maps generated on CIFAR-10 networks}
         \label{fig:fig:additional_cifar_plots}
     \end{subfigure}
\caption{Additional plots for the top proof feature visualization (in addition to Fig.~\ref{fig:grad_maps}) - Visualization of gradient map of top proof feature (having highest priority) generated for networks trained with different training methods. It is evident that the top proof feature corresponding to the standard network highlights both relevant and spurious input features. In contrast, the top proof feature of the provably robust network does filter out the spurious input features, but it comes at the expanse of some important input features.
The top proof features of the networks trained with PGD filter out more spurious features as compared to standard networks.
Finally, the top proof features of the networks trained with COLT filter out the spurious input features and also correctly highlight the relevant input features.}
\label{fig:additional_heat_maps}
\end{figure}

\newpage
\subsection{Visualization of the top proof feature for higher $\epsilon$ values}
\label{sec:addiHighEpsPlots}
\begin{figure}[h!]
    \centering
    \begin{subfigure}[b]{0.5\textwidth}
         \centering
         \includegraphics[width=\textwidth]{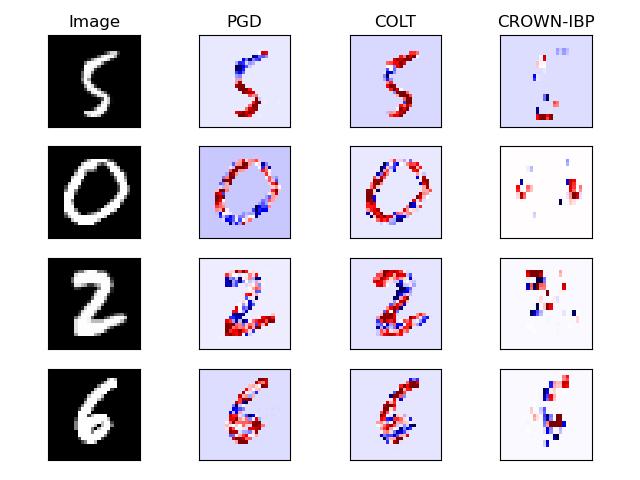}
         \caption{Gradient maps generated on MNIST networks}
         \label{fig:mnist_grad_map_high_eps}
     \end{subfigure}
     \hfill
     \begin{subfigure}[b]{0.5\textwidth}
         \centering
         \includegraphics[width=\textwidth]{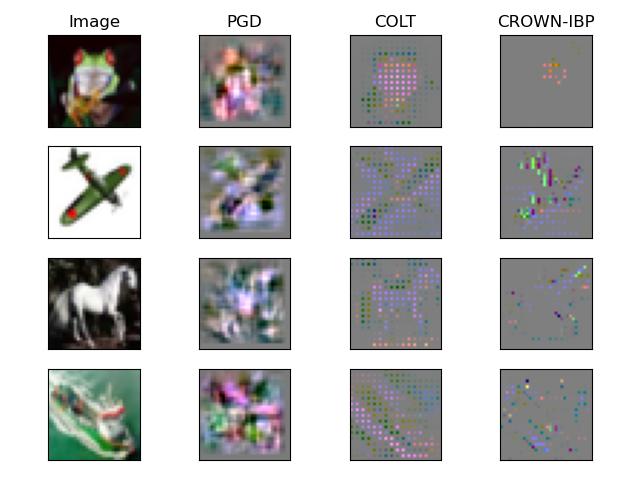}
         \caption{Gradient maps generated on CIFAR-10 networks}
         \label{fig:cifar_grad_map_high_eps}
     \end{subfigure}
\caption{Visualization of gradient map of top proof feature (having highest priority) generated for networks trained with different robust training methods. For these networks, we define local properties with higher $\epsilon$ values. For MNIST networks and CIFAR-10 networks, we take $\epsilon = 0.1$ and $\epsilon = 2 / 255$ respectively.}
\label{fig:grad_maps_with_high_eps}
\end{figure}

\newpage
\subsection{Visualization of multiple proof features from the extracted proof feature set}
\label{sec:addiMultiFeatureVis}
\begin{figure}[h!]
    \centering
    \begin{subfigure}[b]{0.5\textwidth}
         \centering
         \includegraphics[width=\textwidth]{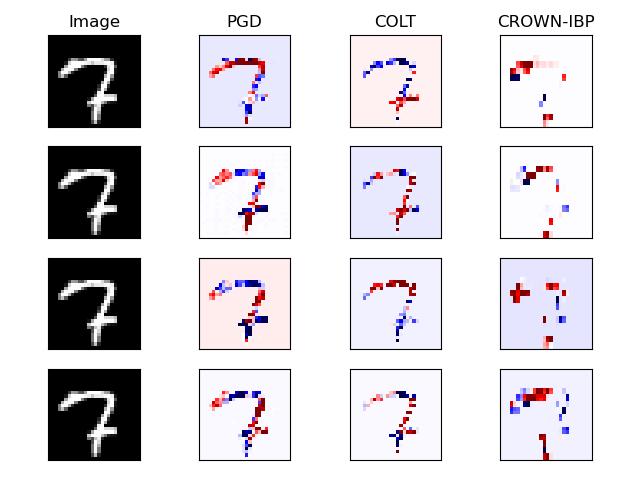}
         \caption{Gradient maps generated on MNIST networks}
         \label{fig:mnist_grad_maps_for_feature_set}
     \end{subfigure}
     \hfill
     \begin{subfigure}[b]{0.5\textwidth}
         \centering
         \includegraphics[width=\textwidth]{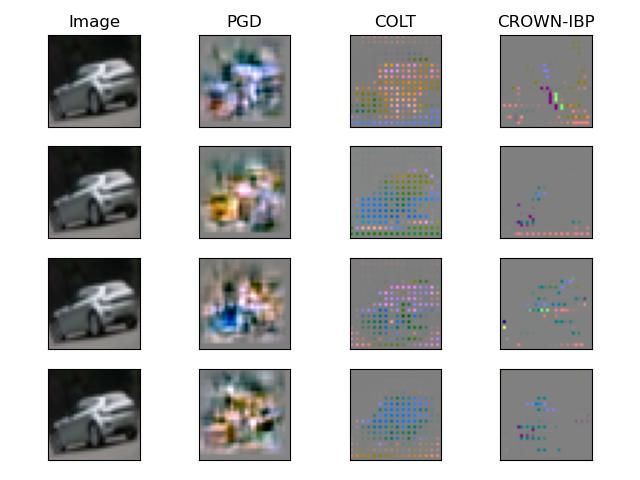}
         \caption{Gradient maps generated on CIFAR-10 networks}
         \label{fig:cifar_grad_maps_for_feature_set}
     \end{subfigure}
\caption{Visualization of gradient maps of top-4 proof features (having highest priority) extracted for networks trained with different robust training methods. The gradient maps of the proof features are presented in decreasing order of priority with the top row showing the gradient map corresponding to the top proof feature of each network. }
\label{fig:grad_maps_for_feature_set}
\end{figure}

\begin{figure}[h!]
    \centering
    \begin{subfigure}[b]{0.5\textwidth}
         \centering
         \includegraphics[width=\textwidth]{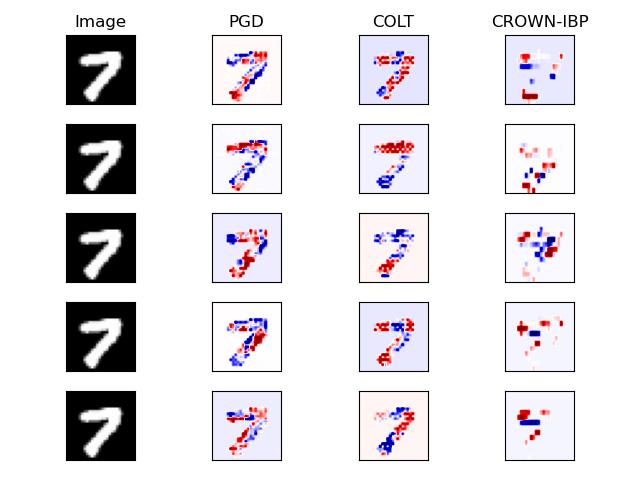}
         \caption{Gradient maps generated on MNIST networks}
         \label{fig:mnist_grad_maps_for_feature_set_additional}
     \end{subfigure}
     \hfill
     \begin{subfigure}[b]{0.5\textwidth}
         \centering
         \includegraphics[width=\textwidth]{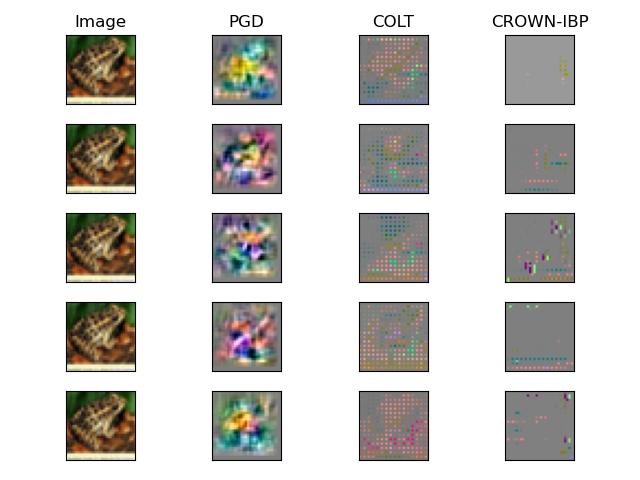}
         \caption{Gradient maps generated on CIFAR-10 networks}
         \label{fig:cifar_grad_maps_for_feature_set_additional}
     \end{subfigure}
\caption{Visualization of gradient maps of top-5 proof features (having highest priority) extracted for networks trained with different robust training methods. The gradient maps of the proof features are presented in decreasing order of priority with the top row showing the gradient map corresponding to the top proof feature of each network.}
\end{figure}

\clearpage
\newpage
\subsection{Comparing proof features with high priority to pruned proof features with low priority}
\label{sec:topBottomFeature}
\begin{figure}[htbp]
\centering
\includegraphics[width=0.5\linewidth, height=0.5\linewidth]{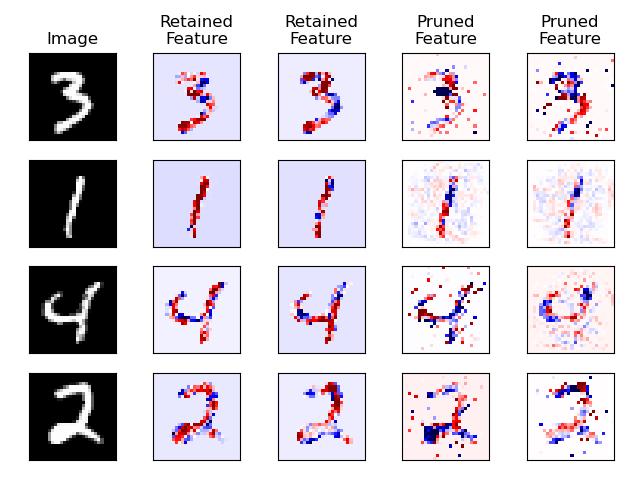}
\caption{Comparing gradients of the top proof features retained by \algname to proof features with low priority. As expected, proof features with low priority contain spurious input features. The shown gradients are computed on COLT-trained MNIST networks.}
\label{fig:mnistTopBottom}
\end{figure}

\clearpage
\newpage
\subsection{Additional plots for sensitivity analysis w.r.t $\pmb{\epsilon_{train}}$}
\label{sec:addSensitivity}
\begin{figure}[htbp]
\centering
\includegraphics[width=0.5\linewidth, height=0.5\linewidth]{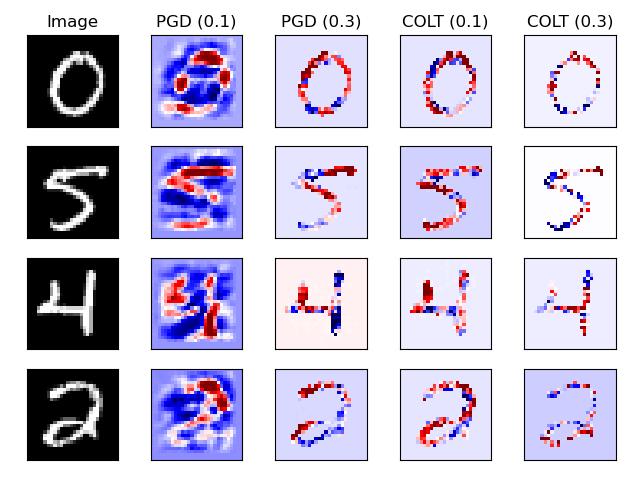}
\caption{Additional plots for visualizing gradients of the top proof feature for PGD and COLT networks trained using different values of $\epsilon_{train} \in \{0.1, 0.3\}$
The gradient map corresponding to the networks trained with the higher value of $\epsilon_{train}$ filter out more input features than the ones trained with smaller $\epsilon_{train}$ value.}
\label{fig:grad_maps_with_changed_eps_additional}
\end{figure}

\clearpage
\newpage
\subsection{Comparing proofs of different verifiers}
\label{sec:additionalVerRes}
We note that for high values of $\epsilon$ i.e. $\epsilon = 0.1$ for MNIST and $\epsilon = 2/255$ for CIFAR-10 most of the properties don't get verified even for robustly trained networks with IBP. Hence, we omitted them for this analysis.

\begin{table}[htbp]
    \centering
    \captionsetup{justification=centering}
    \caption{Comparing proofs of IBP \& DeepZ}
    \resizebox{0.98\linewidth}{!}{
    \begin{tabular}{l l l l l l l l}
    \toprule
    Dataset & Training & Input & \% properties  & \% properties & \% proofs with the & \% proofs with the  & \% proofs with the \\
    \text{} & Method & Region $(\inreg)$ & proved by IBP & proved by DeepZ & same top feature & same top-5 feature & same feature set \\
    \text{} & \text{} & eps ($\epsilon$) & \text{} & \text{} & \text{}& \text{} & \text{} \\
    \midrule
    MNIST & PGD Trained & 0.02 & 26.0 \% & 82.0 \% & 100 \% & 92.0 \% & 92.0 \% \\
    \text{} & COLT & 0.02 & 49.8 \% &  89.4 \% & 100.0 \% & 87.17 \% & 82.05 \% \\
    \text{} & CROWN-IBP & 0.02 &  93.4 \% & 96.4 \% & 99.79 \% & 96.26 \% & 93.15 \% \\
    \midrule
    CIFAR-10 & PGD Trained & 0.2/255 & 10.2 \% & 47.0 \% & 100 \% & 98.31 \% & 96.87 \% \\
    \text{} & COLT & 0.2/255 & 17.2 \% & 53.0 \% & 97.87 \% & 92.53 \% & 89.36 \% \\
    \text{} & CROWN-IBP & 0.2/255 &  21.8 \% & 53.0 \% &  100 \% & 97.92 \% & 95.89 \% \\
     \midrule     
\end{tabular}}
    \label{tab:total_verifier_comparision_table}
\end{table}

\end{document}